\documentclass[11pt]{article}
\usepackage[utf8]{inputenc} 
\usepackage[T1]{fontenc}    
\usepackage{hyperref}       
\usepackage{url}            
\usepackage{booktabs}       
\usepackage{amsfonts}       
\usepackage{nicefrac}       
\usepackage{microtype}      
\usepackage{xcolor}         

\usepackage[normalem]{ulem}
\usepackage{amsthm}
\usepackage{amsmath}
\usepackage{amssymb}
\usepackage{adjustbox}
\usepackage{algorithm}
\usepackage{algpseudocode}
\usepackage{enumitem}
\usepackage{wrapfig}

\newtheorem{theorem}{Theorem}

\newtheorem{remark}{Remark}
\newtheorem{assumption}{Assumption}
\usepackage{natbib}
\usepackage{comment}
\usepackage{rotating}
\usepackage{subcaption}
\usepackage{pdflscape}
\usepackage{caption}
\usepackage{adjustbox}

\usepackage[margin = 1in]{geometry}

\hypersetup{
    colorlinks=true,
    citecolor={blue!50!black},
    urlcolor={cyan!40!black},
    linkcolor={red!60!black}
}

\usepackage{fontawesome5} 

\newcommand{\mask}{\mathsf{m}}

\newcommand{\bx}{\mathbf{x}}
\newcommand{\calX}{\mathcal{X}}

\newcommand{\algname}{MaskDiff-AD}
\newcommand{\rec}{\mathrm{rec}}
\newcommand{\KL}{\mathrm{KL}}

\newcommand*{\ie}{\emph{i.e.}{}}

\algrenewcommand\algorithmicrequire{\textbf{Input:}}
\algrenewcommand\algorithmicensure{\textbf{Output:}}

\newcommand{\yuchen}[1]{\color{cyan}{Yuchen: #1 }\color{black}}

\title{\bfseries\LARGE Masked Diffusion Modeling for Anomaly Detection}
\date{}

\begin{document}

\maketitle

\vspace{-5em}

\begin{center}

{\large
Lixing Zhang$^{1}$ \qquad
Yuchen Liang$^{2}$ \qquad
Liyan Xie$^{1}$
}

{\large
$^{1}$University of Minnesota
\qquad
$^{2}$Ohio State University
}

\vspace{0.8em}

\faGithub\;Code: \href{https://github.com/lxzhang1/MaskDiff-AD}
{https://github.com/lxzhang1/MaskDiff-AD}

\begingroup
\renewcommand\thefootnote{}
\footnotetext{\it \hspace*{-1.8em}Main contact: liyanxie@umn.edu
}
\endgroup

\vspace{0.5em}

\end{center}


\begin{abstract}
Anomaly detection aims to identify samples that deviate from the nominal data distribution and is central to many safety-critical applications. However, developing effective anomaly detection methods for categorical, mixed-type, and discrete sequence data remains challenging and relatively underexplored. Masked diffusion models provide a natural way to model such data by learning to recover masked values from the remaining visible context. In this paper, we propose \emph{Masked Diffusion for Anomaly Detection} (MaskDiff-AD), a forward-only method based on masked diffusion models trained only on nominal data. Given a test sample, MaskDiff-AD constructs anomaly scores from the difficulty of reconstructing randomly masked coordinates, yielding a content-sensitive score that operates directly on discrete state spaces while avoiding reverse-time sampling. We also develop a non-parametric variant of MaskDiff-AD and provide theoretical guarantees by characterizing Type-I and Type-II errors under a fixed detection threshold. Experiments on fourteen categorical and mixed-type tabular datasets from ADBench and UADAD, as well as four text anomaly detection datasets from NLP-ADBench, show that MaskDiff-AD achieves competitive performance against classical, diffusion-based, and recent tabular/text anomaly detection baselines. Notably, MaskDiff-AD achieves the best overall average rank, outperforming all twelve tabular baseline methods.
\end{abstract}

\section{Introduction}
\label{sec:discrete_dte}

Anomaly detection aims to identify observations that deviate significantly from the majority of the data (i.e., the data-generating distribution) \cite{chandola2009anomaly,ruff2021unifying,pang2021deep}. As a core problem in machine learning and statistics, it has been studied for decades and plays an important role in a wide range of applications, including healthcare \cite{salem2013sensor,pachauri2015anomaly}, finance \cite{ahmed2016survey}, cybersecurity \cite{ahmed2016survey}, manufacturing \cite{susto2017anomaly}, particle physics \cite{fraser2022challenges}, and geospatial analysis \cite{yairi2006telemetry}. 
Despite this long history, the increasing prevalence of large-scale datasets with discrete or mixed data types has exposed important limitations of many classical methods \cite{taha2019anomaly}. 
Many modern anomaly detection problems involve categorical, discrete, or
mixed-type data, including tabular records with categorical attributes,
transaction and claims data with heterogeneous feature types, and tokenized text
sequences \cite{han2022adbench,li2025nlp}. These data types are not naturally suited to methods built around Euclidean geometry or continuous densities.  
These challenges motivate the development of anomaly detection methods that are well-suited to the structural characteristics of discrete-type data, while being scalable, interpretable, and sufficiently expressive.

Recently, diffusion models have provided a flexible framework for learning complex data distributions \cite{sohl2015deep,ho2020denoising}. Among them, discrete diffusion models, especially masked diffusion models, have further extended this capability to discrete domains such as text, music, and molecular data
\cite{sahoo2024simple,austin2021structured}. Notably, on certain tasks, masked diffusion models have demonstrated superior performance compared to autoregressive approaches \cite{zheng2025fhs,prabhudesai2026diffusion}. However, as with other generative models, the majority of research on diffusion models has focused on improving sampling efficiency and enabling controllable generation (e.g., \cite{zheng2025fhs,cardei2025constrained,uehara2024survey-finetuning}). In contrast, relatively few studies have explored their use, particularly that of masked diffusion models, for anomaly detection.

In the literature, one notable work has explored the use of (continuous) diffusion models for anomaly detection: the Diffusion Time Estimation (DTE) framework \cite{livernoche2024dte}. This approach estimates the posterior distribution over the diffusion time and uses the inferred corruption level as an anomaly score, thereby avoiding the need for expensive reverse-time sampling.
However, a key component of this framework relies on the Gaussian corruption structure, with a possible discrete analogue in uniform discrete diffusion models. As we show in Section~\ref{sec:method}, this approach does not readily extend to masked diffusion models. In particular, due to the absorbing masking mechanism, the DTE-style posterior becomes ill-posed: it either collapses to identical values across all clean categorical samples or depends solely on the masking pattern of the corrupted observation, rather than the underlying distribution of the test data.


Masked diffusion models (MDM) have recently gained increasing attention for modeling discrete data \cite{sahoo2024simple}. They learn to reconstruct clean
categorical values from partially masked observations, such as learning to predict masked tokens from the remaining context. Motivated by the masking procedure, we propose \emph{Masked Diffusion for Anomaly Detection} (\algname{}). 
The key idea is to employ a \emph{mask-corruption level} tailored to the masked discrete diffusion process, and to use the corresponding \emph{single-step reconstruction difficulty} as the anomaly score. In other words, instead of asking at which corruption time a test sample becomes indistinguishable from noisy normal data, \algname{} asks how difficult it is to reconstruct masked coordinates from the remaining visible context under a model trained only on normal data. A normal sample should have coordinates that are mutually consistent and therefore easy to predict from one another, whereas
an anomalous sample should induce larger masked reconstruction surprisal. This yields a content-sensitive anomaly score without reverse-time sampling or
iterative imputation.

The contributions of our work are summarized as follows: 
\begin{itemize}
    \item We formulate anomaly detection on discrete state spaces using masked diffusion models and show why a direct DTE-style time posterior degenerates in this setting. 
    
    \item We introduce \algname{}, a masked reconstruction-based anomaly detection method
    that is trained \textit{only on nominal samples}, computationally efficient, and does not require reverse-time sampling. We also develop both non-parametric and parametric implementations of
    \algname{}.

    \item We provide theoretical analysis for the proposed reconstruction score, including guarantees on Type-I and Type-II errors.

    \item We demonstrate the effectiveness of our \algname{} with extensive experiments on categorical tabular, mixed-type tabular, and text anomaly detection benchmarks.
\end{itemize}

\vspace{-0.1in} 
\paragraph{Related Work.} 
Our work is connected to several lines of research. Classical anomaly detection
methods include distance-based approaches such as \(k\)-nearest neighbors~\cite{knn},
tree-based methods such as Isolation Forest~\cite{iforest}, and distributional
or copula-based scores such as ECOD~\cite{9737003} and COPOD~\cite{copod}.
These methods are often efficient and broadly applicable, but their performance
can depend strongly on the chosen representation or distance, especially for
categorical and mixed-type data~\cite{taha2019anomaly,chawda2022uadad}. 
Recent tabular anomaly detection methods seek
to capture richer feature dependencies, including masked cell modeling
(MCM)~\cite{yin2024mcm}, non-parametric transformers for anomaly detection
(NPT-AD)~\cite{pmlr-v235-thimonier24a}, internal contrastive learning
(ICL)~\cite{shenkar2022anomaly}, CompreX~\cite{comprex}, and decomposed
representation learning~\cite{ye2025drl}. Our method shares the goal of
modeling feature dependencies, but does so through the forward masking and
conditional reconstruction structure of absorbing discrete diffusion. 

Generative models for anomaly detection have been extensively studied, especially
for image data. Representative examples include autoencoder and variational-autoencoder based detectors~\cite{sakurada2014anomaly,an2015variational}, hybrid autoencoding-density models such as DAGMM~\cite{zong2018deep},
GAN-based detectors such as AnoGAN and GANomaly~\cite{schlegl2017anogan,akcay2018ganomaly}, and flow-based visual
detectors such as DifferNet~\cite{rudolph2021same}. 

Diffusion models have also recently been explored for anomaly detection through
reconstruction-based, density-based, and hybrid scoring
mechanisms~\cite{liu2025survey}. Representative diffusion-based anomaly
detection methods include medical and visual reconstruction methods such as
AnoDDPM~\cite{wyatt2022anoddpm}, DDAD~\cite{mousakhan2024anomaly}, and
DiffusionAD~\cite{zhang2025diffusionad}. These methods are primarily developed for continuous data domains, especially for (medical) image data, where anomaly
scores are often derived from denoising error, likelihood surrogates, or
reconstruction discrepancy after reverse diffusion.

Diffusion time estimation (DTE) \cite{livernoche2024dte} is a recent method for anomaly detection and is efficient because it replaces expensive iterative reverse-chain sampling with a single posterior evaluation, yet retains strong empirical performance on benchmarks. However, its construction relies on the geometry of Gaussian corruption and does not remain discriminative under absorbing masked corruption. Our work instead adapts the diffusion perspective to discrete data by replacing time-posterior estimation with masked reconstruction difficulty.
This keeps the computational advantage of forward-only scoring while producing
a content-sensitive anomaly signal on the original discrete state space.

\section{Preliminaries}\label{sec:pre}

We consider anomaly detection over a general discrete state space $\mathcal{X} = \mathcal{X}_1 \times \cdots \times \mathcal{X}_d$ and denote each data point as a vector $\bx=(x_1,\ldots,x_d)\in\mathcal{X}$, where each coordinate $x_j \in \mathcal{X}_j$ takes values in a finite (and potentially different) state space.  The goal is to distinguish samples generated from a nominal data-generating distribution from anomalous samples that deviate from this distribution. We denote the nominal distribution by $q_0$ and assume that the training data consist {\it only} of normal samples $\mathcal{D}_{\mathrm{tr}}=\{\mathbf{x}_i\}_{i=1}^n$ that are independently and identically sampled from $q_0$.  At test time, a sample may either follow the same nominal distribution or arise from an unknown anomalous distribution $q'$ that differs significantly from $q_0$. The anomalous distribution is not specified during training and no labeled anomalous examples are assumed to be available for training.


Given a test sample $\bx\in\mathcal X$, an anomaly score is assigned such that a larger score indicates that $\bx$ is less likely to have been generated from the normal regime.  Formally, we define an anomaly detector as a scoring function $S:\calX\to \mathbb{R}$ and declare a test sample $\bx$ to be anomalous if $S(\bx)>\gamma$, where $\gamma$ is the detection threshold chosen by controlling the false alarm rate. One example is to choose $\gamma$ by requiring $\mathbb{P}_{\bx\sim q_0}\big(S(\bx)>\gamma\big)\leq \alpha$ for a prescribed significance level $\alpha$. Since the anomalous
distribution is unknown, the main challenge is to construct such a score function that is sensitive to distributional deviations while relying only on normal training data.

\subsection{Masked Diffusion Models}
\label{sec:prelim_mdm}

Masked diffusion models (MDM) provide a generative framework for modeling discrete data by learning to reconstruct clean samples from randomly masked
observations at each generation step \cite{sahoo2024simple}. Trained on nominal data, MDMs can be viewed as learning the underlying distributional characteristics of the nominal distribution $q_0$. 
As the name suggests, MDMs introduce
a special mask state $\mask$ that does not belong to any $\mathcal{X}_j$, resulting in the augmented space:
\[
    \widetilde{\mathcal{X}}
    := \widetilde{\mathcal X}_1\times\cdots\times\widetilde{\mathcal X}_d, \quad \text{ with } \widetilde{\mathcal X}_j := \mathcal X_j \cup \{\mask\}.
\] 
MDMs include a forward and a reverse process.
Given a clean nominal sample $\bx_0\sim q_0$, the forward process constructs a family of intermediate random variables
$\{\bx_t\}_{t \in [0,1]}$ that interpolate between clean data $\bx_0$ and the fully masked state $\bx_1=\{\mask,\ldots,\mask\}$. This is implemented as a \emph{masking (a.k.a., absorbing-state) process} with increasing probability to mask over time. Specifically, let $\alpha_t \in [0,1]$ be a monotone decreasing function with $\alpha_0 \approx 1$ and $\alpha_1 = 0$. With the standard setup in \cite{sahoo2024simple,austin2021structured,shi2024simplified}, one can show that the conditional forward distribution is
\begin{align}\label{eq:forward-mdm}
    q(\bx_t \mid \bx_0)
    =
    \prod_{j=1}^d
    \mathrm{Cat}\big(x_{t,j};\, \alpha_t \delta_{x_{0,j}} + (1-\alpha_t)\delta_{\mathsf{m}}\big),
\end{align}
where $\delta_{x_{0,j}}$ is a delta function centered at $x_{0,j}$.
Moreover, once a coordinate becomes masked, it remains masked for all later times, reflecting the absorbing property.


Then, one way to define the reverse process is to model $q_{s|t,0}(\bx_s \mid \bx_t, \mu_\theta(\bx_{t},t))$ for $s < t$. Here $\mu_\theta^j(\bx_{t},t) := p_\theta^j(\cdot\mid \bx_t)$ is a distribution over $\mathcal{X}_j$ and is
intended to approximate the conditional law of the clean coordinate $x_{0,j}$ given the masked view $\bx_t$.
To train $p_\theta^j$, one can employ the standard masked diffusion training objective, which is a weighted masked reconstruction loss, as follows:
\begin{equation}\label{eq:lossMDM}
    \mathcal{L}_{\mathrm{MDM}}(\theta)
    =
    \int_{0}^1 \mathbb{E}_{\bx_0\sim q_0,\; \bx_t\sim q_{t \mid 0}(\cdot\mid \bx_0)}
    \bigg(
        \sum_{j:x_{t,j}=\mathsf{m}}
        w(t)\log p_\theta^j(x_{0,j}\mid \bx_t)
    \bigg) \mathrm{d} t,
\end{equation}
where $w(t) := \alpha_t'/(1-\alpha_t)$ is a function of $\alpha_t$. 
Essentially, the objective learns to reconstruct the clean data from the partially masked state. 
%
After training on nominal data, the learned
$p_\theta^j(\cdot\mid \bx_t)$ captures the data distribution of the normal regime. Intuitively, a test sample should be considered normal if
its coordinates are predictable from the remaining coordinates under the learned
normal-data conditionals. This observation motivates using masked
reconstruction or conditional likelihood quantities derived from the masked diffusion model as anomaly scores in the next Section.




\begin{figure}[t]
    \centering    \includegraphics[width=0.99\linewidth]{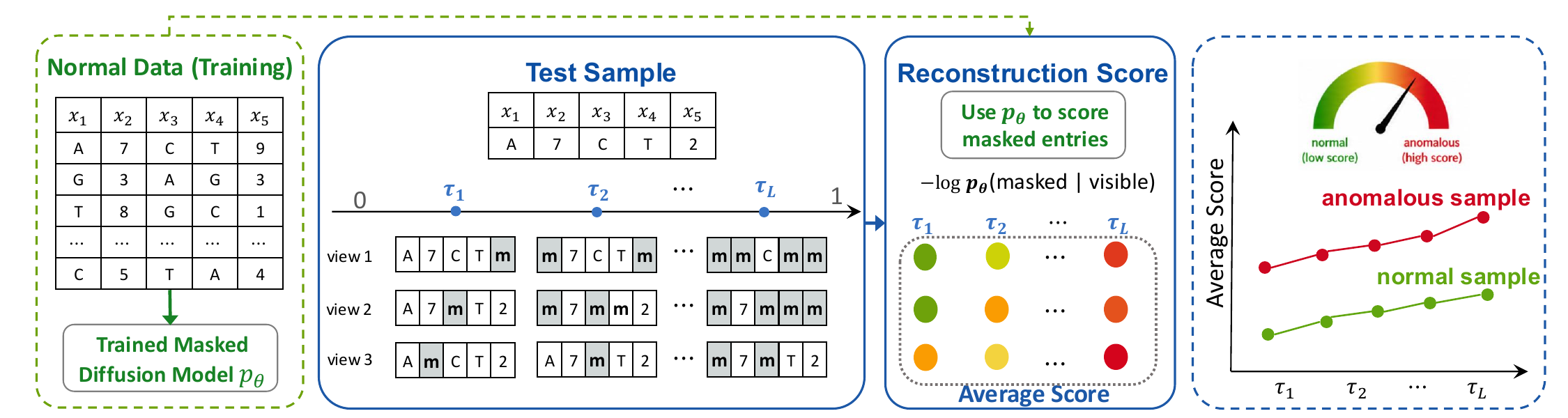}
    \caption{Overview of \algname{}. We first generate masked probe views of a test sample at multiple levels \(\tau_1,\ldots,\tau_L\), and score each view by the reconstruction surprisal of the masked entries using a trained MDM \(p_\theta\) on normal-only data. The final anomaly score averages these scores across views and mask levels; larger score values indicate stronger anomaly evidence.}
    \label{fig:framework}
\end{figure}

\section{Methodology}\label{sec:method}





\subsection{\algname{}: Masked Diffusion for Anomaly Detection}

Our proposed method consists of two steps. \emph{(i) Probe masking:} given a test sample \(\bx\), we generate independent
masked views \(\widetilde\bx^{(\ell,k)}\sim q_{\tau_\ell}(\cdot\mid \bx)\) at
multiple masking levels \(\tau_1,\ldots,\tau_L\). 
\emph{(ii) Reconstruction scoring:} for each masked view, we use the trained
masked diffusion model \(p_\theta\) to compute the reconstruction surprisal of
the masked values, and then average these scores across all views and masking
levels. The framework is illustrated in Figure~\ref{fig:framework}, and we detail these two steps below.

\vspace{-0.1in}
\paragraph{Probe masking.} \label{subsec:probe_schedule}

Let $0 < \tau_1 < \tau_2 < \cdots < \tau_L < 1$
be a pre-specified probe grid. At each probe level \(\tau_\ell\), each coordinate of the test sample $\bx\in\mathcal X$ is replaced by the mask $\mask$ independently with the mask probability $1-\alpha_{\tau_\ell}$ as shown in Eq.~\eqref{eq:forward-mdm}. This induces the following distribution of the masked view $\tilde \bx$ under level $\tau_\ell$:
\begin{equation}\label{eq:probe}
q_{\tau_\ell}(\widetilde \bx\mid \bx)
    :=
    \prod_{j=1}^d
    \left[
        (1-\alpha_{\tau_\ell}) \mathbf 1\{\widetilde x_j=\mask\}
        +
        \alpha_{\tau_\ell}\mathbf 1\{\widetilde x_j=x_j\}
    \right],    
\end{equation}
which can be viewed as the special case of Eq.~\eqref{eq:forward-mdm} under time $t=\tau_\ell$. 
For a masked view \(\widetilde \bx\), we write $M(\widetilde \bx) := \{j:\widetilde x_j=\mask\}$ and $V(\widetilde \bx) := \{j:\widetilde x_j\neq \mask\}$ for the masked and visible coordinate sets. 

Given a chosen integer $K$, we generate $K$ independent probe-masked views \(\widetilde \bx^{(\ell,1)},\ldots,\widetilde \bx^{(\ell,K)}\) for each probe level $\tau_\ell$:
\begin{equation}\label{eq:prob-sample}
   \widetilde \bx^{(\ell,1)},\ldots,\widetilde \bx^{(\ell,K)} \overset{iid}{\sim} q_{\tau_\ell}(\cdot\mid \bx), \text{ for } \ell=1,2,\ldots,L.
\end{equation}
Larger values of \(\tau_\ell\) on average remove more information from the clean sample and therefore produce more heavily corrupted probe views. It is worthwhile noting that the probe levels
\(\tau_1,\ldots,\tau_L\) and the number of views \(K\) are hyperparameters that can be tuned for detection performance. In this work, we use a fixed, uniform
probe grid in the main experiments for simplicity. We also perform a sensitivity analysis on probe levels in our numerical experiments.


\vspace{-0.1in}
\paragraph{Reconstruction scoring.}

We propose to compute the anomaly score by evaluating \emph{how difficult it is to recover the masked coordinates} of a test point under the predictive model trained on nominal data.
Formally, denote the masked diffusion model trained from Eq.~\eqref{eq:lossMDM} as $p_{\hat\theta}^j(\cdot \mid \widetilde \bx)$, \ie, the estimated conditional
distribution of the $j$-th coordinate of the clean data. Then we define the single-probe reconstruction score as
\begin{equation}
    \label{eq:rec_score_general}
    s_\rec(\bx;\widetilde \bx)
    :=
    -\frac{1}{|M(\widetilde \bx)| \vee 1}\displaystyle\sum_{j \in M(\widetilde \bx)}
        \log p_{\hat\theta}^j(x_j \mid \widetilde \bx),
\end{equation}
\ie, the average surprisal of the true values at masked positions.
Intuitively, a normal sample should have a lower reconstruction score, because
its coordinates are consistent with the dependency structure learned from normal
data and are therefore easier to predict from the visible context. In contrast, an anomalous sample is more likely to violate this dependency structure, making
its masked coordinates harder to reconstruct and leading to a higher score, as visualized in Figure~\ref{fig:framework} and confirmed by the toy synthetic example in Figure~\ref{fig:synthetic_example}.


To reduce sensitivity to a particular probe level or random mask, and to improve robustness across various anomalous scenarios, we average the reconstruction score in Eq.~\eqref{eq:rec_score_general}
over \(L\) probe levels and \(K\) independent
masked views per level. Specifically, let
\(\{\widetilde \bx^{(\ell,k)}\}_{\ell\in[L],\,k\in[K]}\) be sampled as in
Eq.~\eqref{eq:prob-sample}, the final aggregated anomaly score is 
\begin{equation}\label{eq:final_score}
 S_{\rec}(\bx)
    :=
    \frac{1}{LK}
    \sum_{\ell=1}^L \sum_{k=1}^K
    s_\rec\!\left(\bx;\, \widetilde \bx^{(\ell,k)}\right).    
\end{equation}
Anomaly detection is then performed by comparing \(S_\rec(\bx)\) with a pre-specified threshold, where test data with larger scores are classified as anomalous. We refer to the resulting method as \emph{Masked Diffusion for Anomaly Detection} (\algname{}). The full algorithm is summarized in Algorithm~\ref{alg:rec_scoring}.

\begin{remark}[Comparison with DTE \cite{livernoche2024dte}]
Our work is inspired by the diffusion time estimation (DTE) for anomaly detection \cite{livernoche2024dte}, which computes the posterior of the diffusion time given the test sample and uses the posterior mean as the anomaly score. We would like to emphasize that DTE does not transfer naturally to MDM here. A natural attempt to extend DTE to MDM is to retain its corruption-time posterior, $q(t\mid\bx)\propto q(t)q_t(\bx)$. In masked diffusion, however, $q_t$ is defined on the extended space $\widetilde{\mathcal X}$ that includes the mask token $\mask$. If a clean sample
$\bx\in\mathcal X$ is viewed as an all-visible element of this space, then $q_t(\bx)=\alpha_t^d q_0(\bx)$, so the resulting posterior $q(t\mid\bx)\propto q(t)\alpha_t^d$ is identical for every clean test sample and therefore contains no discriminative signal. This motivated us to propose the reconstruction-based anomaly score instead under MDM.   
\end{remark}
%

\begin{algorithm}[t]
\caption{Test-Time \algname{}}
\label{alg:rec_scoring}
\begin{algorithmic}[1]
\Require Test sample \(\bx\); pre-trained masked diffusion model \(p_{\hat\theta}^j(\cdot\mid \bx_t) \) for $j=1,\ldots,d$; probe grids \(\{\tau_\ell\}_{\ell=1}^L\); number of views per mask level \(K\); detection threshold \(\gamma\).
\Ensure Anomaly label $\hat y(\bx)\in\{0,1\}$, where 1 indicates anomalous and 0 indicates normal.
\For{each \(\ell=1,\ldots,L\) and \(k=1,\ldots,K\)}
        \State Sample the corresponding probe-masked view
        $\widetilde \bx^{(k,\ell)}\sim q_{\tau_\ell}(\cdot\mid \bx)$ according to Eq.~\eqref{eq:probe}.
        \State Compute per-view reconstruction difficulty
    $s_\rec^{(k,\ell)}$ according to Eq.~\eqref{eq:rec_score_general}.
\EndFor
\State Compute the aggregated anomaly score $S_\rec(\bx)$ according to Eq.~\eqref{eq:final_score}.
\State \Return Anomaly label $\hat y(\bx)=\mathbf 1\{S_{\rec}(\bx)>\gamma\}$.
\end{algorithmic}
\end{algorithm}

\subsection{Some Variants of the Proposed Method} \label{sec:nonpara}

In this subsection, we discuss two practical variants of the proposed \algname{} framework. First, for small-scale categorical datasets, especially
when many coordinates have binary or low-cardinality state spaces, we introduce a non-parametric version of the reconstruction score that estimates masked-coordinate conditional probabilities directly from normal training samples. This avoids the training of parametric MDMs. Second, when a pre-trained MDM is unavailable and one does not need to re-train a generic MDM, we provide a slightly different alternative loss to Eq.~\eqref{eq:lossMDM}, for training the predictive models $p_\theta^j(\cdot \mid \widetilde \bx)$. This objective is more aligned with our anomaly score and can be useful especially for single-probe scoring. 

\vspace{-0.1in}
\paragraph{Non-parametric reconstruction score.} 
For a probe-masked view \(\widetilde \bx\), instead of training a predictive parametric model \(\{ p_{\hat\theta}^j(\cdot\mid \bx_t), j=1,\ldots,d\}\) to compute the anomaly score, we can also adopt the kernel-smoothed empirical conditional distribution as follows. Recall that \(M(\widetilde \bx)\) and \(V(\widetilde \bx)\) denote its masked and visible coordinate sets, respectively. Specifically, for each normal training sample \(\bx^{(n)}\in\mathcal D_{\rm tr}\), define the visible-coordinate Hamming distance between $\bx^{(n)}$ and $\widetilde \bx$ as
\[
    d_{\rm vis}(\widetilde \bx,\bx^{(n)})
    :=
    \sum_{r\in V(\widetilde \bx)}
    \mathbf 1\{\widetilde x_r\neq x^{(n)}_r\}.
\]
Then we estimate the reconstruction probability by
\begin{equation}\label{eq:nonpara}
\widehat p_{\mathrm{NP}}^j(x_j = a \mid \widetilde \bx)
:=
\frac{
\sum_{n=1}^N \mathbf 1\{x^{(n)}_j=a\}\,K_\lambda(\widetilde \bx,\bx^{(n)})
}{
\sum_{n=1}^N K_\lambda(\widetilde \bx,\bx^{(n)})
}, \quad \forall j\in M(\widetilde \bx), a\in\calX_j.   
\end{equation}
Here $K_\lambda(\widetilde \bx,\bx^{(n)})=\exp\bigl(-\lambda d_{\rm vis}(\widetilde \bx,\bx^{(n)})\bigr)$ is the kernel weight and \(\lambda>0\) is a pre-specified bandwidth parameter. The non-parametric anomaly score can be computed similarly as in Algorithm~\ref{alg:rec_scoring} by replacing the parametric predictive model $p_{\hat\theta}^j(\cdot\mid \bx_t)$ with the non-parametric model in Eq.~\eqref{eq:nonpara}. The algorithm is given in Algorithm~\ref{alg:rec_scoring_nonpara} in the Appendix for completeness. 

To illustrate the non-parametric \algname{} score and compare it with the parametric score in Eq.~\eqref{eq:final_score}, we evaluate both on a synthetic setting with varying anomaly levels in Figure~\ref{fig:synthetic_example}, where
\(r=0\) (bottom rows in each heatmap) corresponds to fully anomalous samples and \(r=1\) (top rows in each heatmap)  to fully nominal samples. As samples move away from the normal distribution, the reconstruction
posterior assigned to the true masked value decreases and the corresponding
surprisal increases. This implies that both parametric and non-parametric scores can identify anomalous samples; the non-parametric score is comparable at smaller
mask rates, while the parametric score becomes more effective at larger mask
rates. For small datasets with simple discrete structure, however, the
non-parametric version may be preferable because it requires no model training.

\begin{figure}[t!]
    \centering
    \begin{tabular}{cc}
        \includegraphics[width=0.45\linewidth]{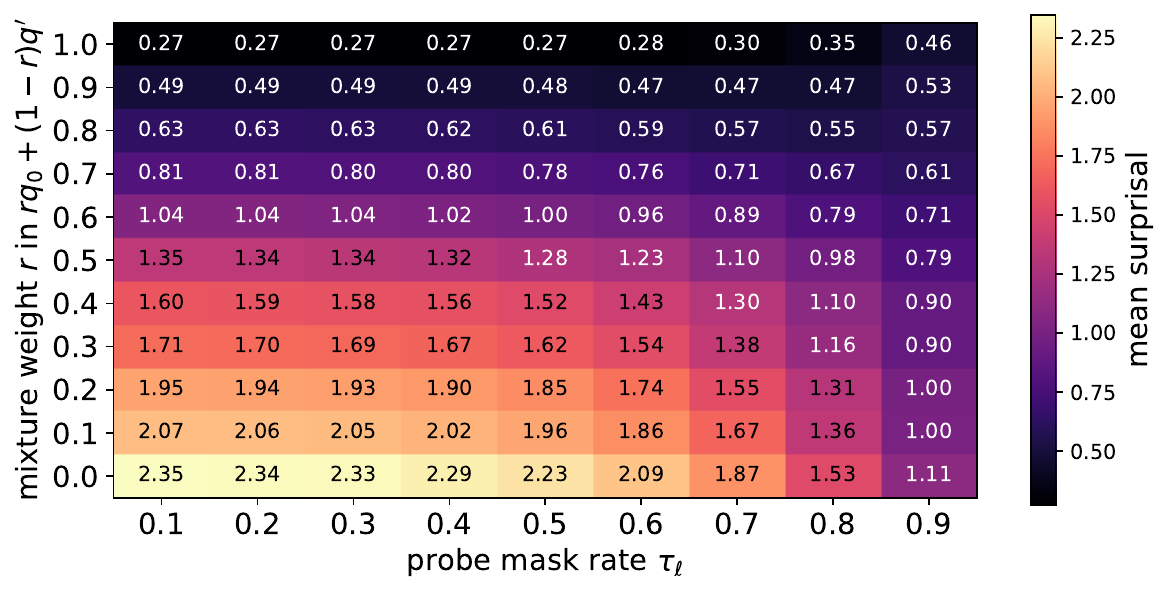} &
        \includegraphics[width=0.45\linewidth]{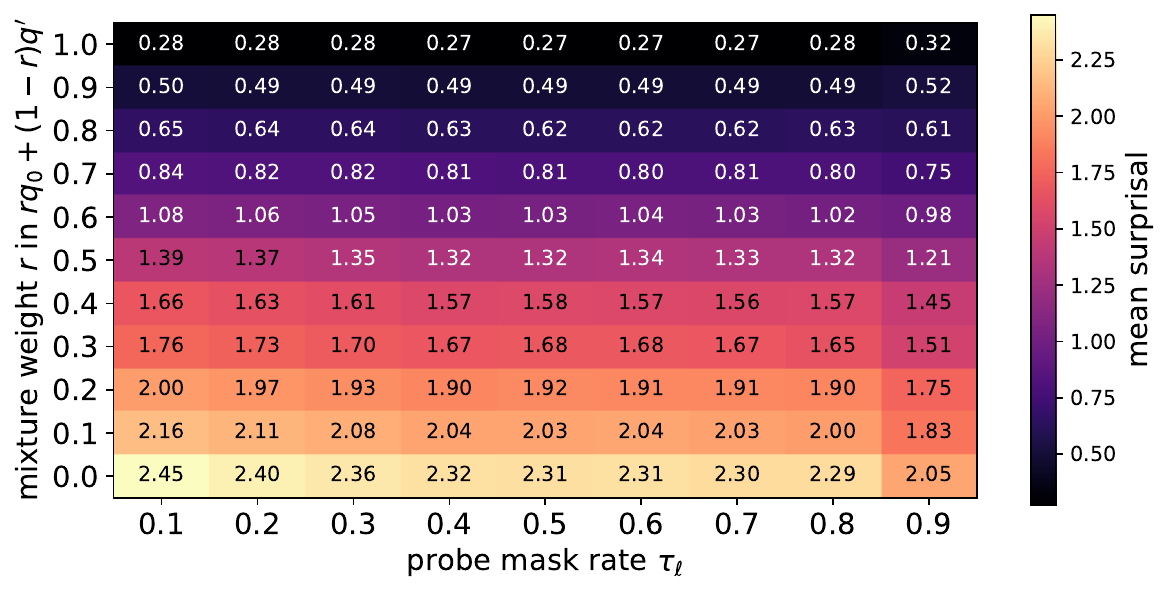} \\
        \vspace{-0.02in}
        (a) Non-parametric score \(\mathbb{E}[-\log \hat p^j_{\mathrm{NP}}(x_j \mid \widetilde{\bx})]\)  & (b) Parametric score \(\mathbb{E}[-\log p^j_{\hat\theta}(x_j \mid \widetilde{\bx})]\) \\
    \end{tabular}
    \vspace{-0.07in}
    \caption{Synthetic heatmaps of the expectation of non-parametric and parametric reconstruction anomaly score under the mixture \(\mu_r = r q_0 + (1-r) q'\). Here, \(q_0\) denotes the normal distribution over length-40 binary vectors, generated by drawing \(S \sim \mathrm{Bernoulli}(0.5)\) and making each coordinate a noisy copy of \(S\) with an 8\% bit-flip probability. The anomaly distribution \(q'\) follows the same construction except that the target coordinate \(j\) is flipped relative to \(S\), preserving its marginal while changing its conditional relationship with the visible context. For each \(\bx \sim \mu_r\), we mask coordinates at rate \(1-\alpha_{\tau_\ell}\) and then force the target coordinate \(j\) to be masked. }
    \label{fig:synthetic_example}
    \vspace{-0.1in}
\end{figure}


\vspace{-0.1in}
\paragraph{An alternative training objective.}
\label{sec:parametric_discrete_dte}

When a pre-trained MDM is available, \algname{} can be applied directly using
the trained model without additional training. When such a model is unavailable
and the goal is anomaly detection, it is not always necessary to train a generic MDM over all diffusion times. Instead, it suffices to learn the conditional
distributions needed for the reconstruction score at the probe levels used at test time. To this end, we may also choose to train a time-conditioned predictive model \(p_{\theta_{\rm rec}}^j(\cdot\mid \widetilde\bx,\ell)\), which estimates the
distribution of the clean coordinate \(x_j\) given a masked view \(\widetilde\bx\) and the probe-level index \(\ell\). Specifically, for each normal training sample \(\bx^{(n)}\in\mathcal D_{\rm tr}\) and each
probe level \(\tau_\ell\), we draw a masked view
\(\widetilde\bx^{(\ell,n)}\sim q_{\tau_\ell}(\cdot\mid \bx^{(n)})\). We then
fit the predictive model by minimizing the average masked reconstruction loss
\begin{equation}\label{eq:para_rec_loss_view_level}
\mathcal L_{\mathrm{rec}}(\theta_{\mathrm{rec}})
= -\frac{1}{NL}
\sum_{n=1}^N \sum_{\ell=1}^L
\frac{1}{|M(\widetilde \bx^{(\ell,n)})|\vee 1}
\sum_{j\in M(\widetilde \bx^{(\ell,n)})} \log
p_{\theta_{\rec}}^j
\bigl( x_j^{(n)} \mid \widetilde \bx^{(\ell,n)},\ell \bigr).
\end{equation}
This objective directly matches the test-time reconstruction score: the model is
trained to predict only the coordinates that are masked in the probe views, and
the loss is normalized per view to make different masking levels comparable.
Averaging over \(\ell=1,\ldots,L\) trains the model for the full multi-probe
score used in Eq.~\eqref{eq:final_score}; for single-probe scoring, one can set
\(L=1\) and train only at the chosen probe level. In this sense, the objective is
tailored to anomaly scoring rather than to general-purpose masked diffusion
generation.


The training loss in Eq.~\eqref{eq:para_rec_loss_view_level} closely resembles that of MDM in Eq.~\eqref{eq:lossMDM}, with two key distinctions. First, our $\mathcal L_{\mathrm{rec}}$ is normalized by the number of masked coordinates, whereas this normalization is replaced with a time-dependent weight $w(t)=\alpha_t'/(1-\alpha_t)$ in the standard MDM loss. Second, the loss in Eq.~\eqref{eq:para_rec_loss_view_level} is only computed with the fixed $L$ mask levels that will be used in detection. This seemingly minor difference reflects the distinct objectives of the two settings.
For sampling, \cite{sahoo2024simple,shi2024simplified} showed that the MDM loss admits a clear interpretation as the KL divergence between probability paths, and minimizing it ensures effective generation along the diffusion trajectory. In contrast, for anomaly detection, our normalized $\mathcal L_{\mathrm{rec}}$ is directly aligned with the test-time scoring used in \algname{}.
Furthermore, the loss $\mathcal L_{\mathrm{rec}}$ is expected to be trained exclusively on nominal data, while a pre-trained MDM might be trained on a more diverse dataset as in general-purpose language modeling.

\subsection{Performance Guarantees}\label{sec:theory}




In this subsection, we provide some theoretical insights for the Type-I and Type-II errors of the proposed algorithm under a well-specified reconstruction model.
Let $q^j(\cdot\mid \widetilde \bx,\tau_\ell)$ denote the oracle conditional distribution at probe level \(\tau_\ell\), and let the corresponding oracle reconstruction score be
\begin{equation}
\label{def:S_ast}
    S^\ast(\bx)
    :=
    \frac{1}{L}
    \sum_{\ell=1}^L
    \mathbb E_{\widetilde \bx \sim q_{\tau_\ell|0}(\cdot\mid \bx)}
    \left[
    -\frac{
        1
    }{
        |M(\widetilde \bx)|\vee 1
    }
    \sum_{j\in M(\widetilde \bx)} \log q^j\bigl(x_j\mid \widetilde \bx,\tau_\ell\bigr)
    \right].
\end{equation}
We also define 
\begin{equation} \label{def:S_rec_ast}
    S_\rec^\ast(\bx)
    :=
    \frac{1}{L}
    \sum_{\ell=1}^L
    \mathbb E_{\widetilde \bx \sim q_{\tau_\ell|0}(\cdot\mid \bx)}
    \left[
    -\frac{
        1
    }{
        |M(\widetilde \bx)|\vee 1
    }
    \sum_{j\in M(\widetilde \bx)}
        \log p_{\hat\theta}^j\bigl(x_j\mid \widetilde \bx,\tau_\ell\bigr)
    \right],
\end{equation}
which is the same quantity as $S^\ast(\bx)$ except when the reconstruction model is imperfect.
If we define $\Delta_{\KL}(\bx) := S_\rec^\ast(\bx) - S^\ast(\bx)$, note that
\[
\mathbb E_{\bx \sim q_0}[\Delta_{\KL}(\bx)] = \frac{1}{L}
    \sum_{\ell=1}^L
    \mathbb E_{\widetilde \bx \sim q_{\tau_\ell}}
    \left[
    \frac{
        1
    }{
        |M(\widetilde \bx)|\vee 1
    }
    \sum_{j\in M(\widetilde \bx)}
        KL(q^j\bigl(\cdot \mid \widetilde \bx,\tau_\ell\bigr) || p_{\hat\theta}^j\bigl(\cdot \mid \widetilde \bx,\tau_\ell\bigr))
    \right] > 0.
\]
Further, when $p_{\hat\theta}^j$ is well-trained with the loss in Eq.~\eqref{eq:para_rec_loss_view_level}, we have $\mathbb E_{\bx \sim q_0}[\Delta_{\KL}(\bx)]\le \epsilon$ for some small $\epsilon$.
Finally, we let $S_\rec(\bx)$ be the anomaly score in Eq.~\eqref{eq:final_score}, which is a sampled version (or realization) of $S_\rec^\ast(\bx)$.
The following theorem summarizes the guarantee on Type-I/Type-II errors of the anomaly detector. The proof can be found in Appendix~\ref{app:theory}.

\begin{theorem}[Detection performance]
\label{thm:detection_performance}

We assume there exists constant $C>0$ such that for all \(j\), all probe levels \(\tau_\ell\), and all masked views
\(\widetilde \bx\), the trained and true models satisfy $-\log p_{\widehat{\theta}_{\mathrm{rec}}}^j(x_j\mid \widetilde \bx,\tau_\ell)\in[0,C]$, and $-\log q^{j}(x_j\mid \widetilde \bx,\tau_\ell)\in[0,C]$.
We also assume that the training loss is small such that $\mathbb E_{\bx \sim q_0}[\Delta_{\KL}(\bx)]\le \epsilon$. Then, for a fixed threshold $\gamma$, we have 
\begin{itemize}[leftmargin=1em]
    \item Type-I error: for normal data $\bx \sim q_0$, let $\mu_0^* := \mathbb E_{\bx \sim q_0}[S^\ast(\bx)]$ be the oracle normal mean. Then,
    \[
    \mathbb P_{\bx\sim q_0}(S_{\mathrm{rec}}(\bx) > \gamma)\leq 3 \exp\left(-\frac{\min\{L, K\} (\gamma - \mu_0^\ast - \epsilon)^2}{18 C^2}\right). 
    \]
    \item Type-II error: for anomalous data $\bx\sim q'
    $, let $\mu_1^* := \mathbb E_{q'}[S^*(\bx)]$.
    Assume that $\mu_1^*-\mu_0^\ast>0$ and no estimation error, i.e., $\varepsilon = 0$. Then,
    \[
    \mathbb{P}_{\bx \sim q'}(S_{\mathrm{rec}}(\bx) \le \gamma)
    \le
    2 \exp\left(-\frac{\min\{L,K\}(\mu_1^*-\gamma)^2}{2 C^2}\right).
    \]
\end{itemize}
\end{theorem}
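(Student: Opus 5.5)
The plan is to split the deviation $S_{\rec}(\bx)-\mu_0^\ast$ into three pieces and control each one separately:
\[
S_{\rec}(\bx)-\mu_0^\ast=\underbrace{\bigl(S_{\rec}(\bx)-S_{\rec}^\ast(\bx)\bigr)}_{\text{mask sampling}}+\underbrace{\Delta_{\KL}(\bx)}_{\text{model error}}+\underbrace{\bigl(S^\ast(\bx)-\mu_0^\ast\bigr)}_{\text{data fluctuation}}.
\]
With $t:=\gamma-\mu_0^\ast-\epsilon$, the event $\{S_{\rec}>\gamma\}$ is contained in the union of three events:
\begin{itemize}
\item $\{S_{\rec}-S_{\rec}^\ast>t/3\}$;
\item $\{\Delta_{\KL}-\epsilon>t/3\}$;
\item $\{S^\ast-\mu_0^\ast>t/3\}$.
\end{itemize}
Each event will receive a bound of the form $\exp(-\min\{L,K\}t^2/(18C^2))$. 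A union bound then gives the constant $3$ and the factor $18=2\cdot 9$.

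\textbf{Mask-sampling term.} Conditional on $\bx$, the $LK$ views $\widetilde\bx^{(\ell,k)}$ are independent, and each single-probe score $s_{\rec}(\bx;\widetilde\bx^{(\ell,k)})$ lies in $[0,C]$, since it is an average of surprisals in $[0,C]$. The conditional mean of $S_{\rec}(\bx)$ is exactly $S^\ast_{\rec}(\bx)$. Hoeffding's inequality therefore gives
\[
\mathbb P\bigl(S_{\rec}-S^\ast_{\rec}>t/3\mid\bx\bigr)\le\exp\bigl(-2LK(t/3)^2/C^2\bigr),
\]
which is at most $\exp(-\min\{L,K\}t^2/(18C^2))$. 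Integrating over $\bx$ removes the conditioning.

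\textbf{Data-fluctuation and model-error terms.} Here there is only a single test point $\bx$, so I would exploit the averaging over the $L$ levels. Write
\[
S^\ast(\bx)=\frac1L\sum_\ell g_\ell(\bx),\qquad g_\ell\in[0,C],
\]
and treat the $L$ level-wise contributions as independent bounded summands. This is the idealization implicitly needed to get a $\min\{L,K\}$ rate, and it can be realized by regarding each level as using an independent copy of the randomness. Hoeffding's inequality applied to these $L$ terms then gives $\exp(-2L(t/3)^2/C^2)$.

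The term $\Delta_{\KL}$ takes values in $[-C,C]$ and has mean at most $\epsilon$. The same level-averaging argument, applied to $\Delta_{\KL}-\mathbb E\Delta_{\KL}$, gives a bound of $\exp(-L t^2/(18C^2))$ after adjusting for the range $2C$. This is where the denominator $18$ rather than $9/2$ comes from.

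\textbf{Type-II error.} With $\epsilon=0$ the model is exact, so $S_{\rec}^\ast=S^\ast$. I would use only the two-term split
\[
\{S_{\rec}\le\gamma\}\subseteq\{S_{\rec}-S^\ast\le-(\mu_1^\ast-\gamma)/2\}\cup\{S^\ast-\mu_1^\ast\le-(\mu_1^\ast-\gamma)/2\},
\]
and apply the same two Hoeffding bounds under $q'$. Tracking constants, this gives $2\exp(-\min\{L,K\}(\mu_1^\ast-\gamma)^2/(2C^2))$. This is looser than the bound claimed in the theorem, which suggests the authors apply Hoeffding with range $C$ to deviations of size $(\mu_1^\ast-\gamma)$ per term. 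I would simply follow whichever bookkeeping reproduces the stated constant.

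\textbf{Main obstacle.} The delicate step is concentration of $S^\ast(\bx)$ and $\Delta_{\KL}(\bx)$ at rate $L$ from a \emph{single} test sample. Those quantities are deterministic functions of $\bx$, so they are not genuinely averages of independent terms. To make that step rigorous, I would first try to reinterpret the per-level terms as driven by independent draws. Failing that, I would state it explicitly as an assumption that per-level scores are independent and bounded, which the theorem's $\min\{L,K\}$ dependence appears to presuppose.
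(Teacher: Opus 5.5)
Your argument matches the paper's proof step for step: the same three-term split, the same union bound at $(\gamma-\mu_0^\ast-\epsilon)/3$, Hoeffding with range $2C$ for $\Delta_{\KL}$ (the source of the $18$), and the same two-term split for Type-II. Your mask-sampling step is rigorous, and sharper than the paper's, because conditionally on $\bx$ you use all $LK$ independent views.

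The problem is the step you flag yourself, and it is a genuine gap that the paper's proof shares. The paper simply invokes ``Hoeffding's inequality'' to get $\exp(-Lt^2/(2C^2))$ and $\exp(-2Lt^2/C^2)$ for $\Delta_{\KL}(\bx)$ and $S^\ast(\bx)$, with $t=(\gamma-\mu_0^\ast-\epsilon)/3$. Your patch of ``regarding each level as using an independent copy of the randomness'' is not available inside the theorem. The $L$ summands $g_\ell(\bx)$ are deterministic functions of one and the same test point, and nothing in the setup makes them independent; neighbouring probe levels typically give nearly identical surprisals.

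In fact no argument can give an $L$-rate here, because the bound is false as stated. Take $d=1$, $\calX_1=\{0,1\}$, $q_0(1)=p<1/2$, $p_{\hat\theta}=q$ (so $\Delta_{\KL}\equiv 0$ and $\epsilon=0$), $\alpha_t=1-t$, and $\tau_\ell=\ell/(L+1)$. Each view is either fully masked, with surprisal $-\log q_0(x)$, or fully visible, with score $0$. Hence $S^\ast(\bx)=\tfrac12\log(1/q_0(x))$ for every $L$, and the boundedness assumption holds with $C=\log(1/p)$.

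For any $\gamma\in(\mu_0^\ast,\tfrac12\log(1/p))$ we get $\mathbb P_{q_0}(S^\ast(\bx)>\gamma)=p$. Since $S_{\rec}(\bx)\to S^\ast(\bx)$ in probability as $LK\to\infty$, also $\mathbb P_{q_0}(S_{\rec}(\bx)>\gamma)\to p$. The claimed bound $3\exp(-\min\{L,K\}(\gamma-\mu_0^\ast)^2/(18C^2))$, by contrast, tends to $0$. The same mechanism, e.g.\ with $q'(1)=1/2$ and $\gamma\in(\tfrac12\log(1/(1-p)),\mu_1^\ast)$, breaks the Type-II bound. More generally, $S^\ast(\bx)$ is a fixed, typically non-degenerate random variable under $q_0$, so its tail cannot decay in $L$.

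So stating per-level independence as an explicit assumption, as you propose, would not be a mild regularity condition; it contradicts the model. What you can actually prove is the conditional bound $\mathbb P\bigl(S_{\rec}(\bx)-S^\ast_{\rec}(\bx)>u\mid\bx\bigr)\le\exp(-2LKu^2/C^2)$. From there you can either compare $\gamma$ with $S^\ast(\bx)+\Delta_{\KL}(\bx)$ pointwise, or keep the data and model terms as $L$-free tails such as $\mathbb P_{q_0}(S^\ast(\bx)-\mu_0^\ast>u)$, controlled for instance by Chebyshev through $\mathrm{Var}_{q_0}(S^\ast)$.

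Two smaller points. First, your Type-II bookkeeping already gives exactly the stated constant: $\exp(-2n((\mu_1^\ast-\gamma)/2)^2/C^2)=\exp(-n(\mu_1^\ast-\gamma)^2/(2C^2))$ is precisely the paper's computation, so there is no looser/tighter discrepancy to reconcile. Second, both parts tacitly need $\gamma>\mu_0^\ast+\epsilon$ (resp.\ $\gamma<\mu_1^\ast$), since one-sided Hoeffding requires a positive deviation.
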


\begin{remark}
The gap between $\mu_0^*$ and $\mu_1^*$ increases as $q'$ deviates further from $q_0$. To build some intuition, fix a large $\tau_\ell$ and consider the regime where $|M(\Tilde{\bx})|$ is also large and fixed. Then,
\begin{align*}
\mu_0^* &\approx
\mathbb E_{j} \mathbb E_{\bx \sim q_0, \widetilde \bx \sim q_{\tau_\ell|0}(\cdot\mid \bx)}
\left[
-\log q^j\bigl(x_j\mid \widetilde \bx,\tau_\ell\bigr)
\right]
= \mathbb E_{j} \mathbb E_{\widetilde \bx \sim q_{\tau_\ell}}
H\left(q^j\bigl(\cdot \mid \widetilde \bx,\tau_\ell\bigr)\right),
\\
\mu_1^* &\approx
\mathbb E_{j} \mathbb E_{\bx \sim q', \widetilde \bx \sim q_{\tau_\ell|0}(\cdot\mid \bx)}
\left[
-\log q^j\bigl(x_j\mid \widetilde \bx,\tau_\ell\bigr)
\right]
= \mathbb E_{j} \mathbb E_{\widetilde \bx \sim q'_{\tau_\ell}}
H\left(q'^j\bigl(\cdot \mid \widetilde \bx,\tau_\ell\bigr), q^j\bigl(\cdot \mid \widetilde \bx,\tau_\ell\bigr)\right),
\end{align*}
where $H(\cdot)$ denotes entropy, $q'_{\tau_\ell}$ is the forward masking distribution initialized from $q'$, and $H(q', q)$ denotes cross-entropy, defined as $H(q', q) = H(q') + KL(q' || q)$.
Since the diffusion process implies $q'_{\tau_\ell} \approx q_{\tau_\ell}$ for large $\tau_\ell$, the two terms become comparable. For such a case, the dominant contribution to $\mu_1^* - \mu_0^*$ arises from the KL divergence between the corresponding reconstruction distributions. This separation ensures reliable detection when $\gamma$ is chosen appropriately.



\end{remark}

\section{Numerical Experiments}

\paragraph{Numerical setup.} We evaluate \algname{} on \textit{eighteen} real-world datasets, including \textit{thirteen} fully discrete tabular data, \textit{one} mixed-type tabular data (with both categorical variables and continuous variables), and \textit{four} discrete text sequences. The summary of datasets is shown in Table~\ref{tab:datasets} in Appendix~\ref{sec:datasets}. 
All experiments follow the normal-only training of the anomaly detection protocol: during training, each method has access only to samples from the normal class; anomalous samples are used only for test-time evaluation. 
On tabular data, features are represented as discrete symbols, with continuous features discretized when necessary. On text data, each document is
converted into a fixed-length token sequence and masked reconstruction is performed at the token level. This allows the same forward-only scoring principle to be applied across different discrete data modalities. 

For \algname{}, we use fixed hyperparameters across datasets rather than
validation-based or per-dataset tuning. We use the standard linear absorbing
schedule $\alpha_t=1-t$, so the probe level $\tau_\ell$ has mask probability
$1-\alpha_{\tau_\ell}=\tau_\ell$. The parametric tabular and text models use
the uniform probe grid $\tau_\ell\in\{0.1,0.2,\ldots,0.9\}$, while the
non-parametric tabular model uses
$\tau_\ell\in\{0.15,0.30,0.45,0.60\}$.
For tabular data, the parametric reconstruction score averages over $K=16$ masked probe views at each probe level. The non-parametric model uses the
visible-coordinate Hamming kernel with bandwidth $\lambda=1$, takes all normal
training samples as the reference set, and averages over $8$ masked probe views
at each probe level. For text data, we use the GPT-2 \cite{radford2019language} tokenizer only, without
using pretrained GPT-2 weights, and we average the reconstruction score over $24$
independently generated masked probe views at each probe level. Model
architectures, optimization details, and additional implementation
hyperparameters are provided in Appendix~\ref{sec:hyperparameters}.


\vspace{-0.1in}
\paragraph{Baselines and metrics.} 

We compare against classical anomaly detection baselines, DTE-based variants, and recent tabular or text anomaly detection methods when applicable. Specifically, for tabular data, we compare against 12 baseline methods, including COPOD \cite{copod}, DeepSVDD \cite{pmlr-v80-ruff18a}, DTE-Categorical \cite{livernoche2024dte}, DTE-InvGamma \cite{livernoche2024dte}, DTE-Gaussian \cite{livernoche2024dte}, DRL \cite{ye2025drl}, ECOD \cite{9737003}, GOAD \cite{bergman2020goad}, Hamming \(k\)-nearest neighbors
(Hamming-\(k\)NN)~\cite{knn}, ICL \cite{shenkar2022anomaly}, IForest \cite{iforest}, and MCM \cite{yin2024mcm}. For text data, we compare with DATE~\cite{manolache-etal-2021-date}, FATE \cite{das2023fewshotanomalydetectiontext},
and several embedding-based baselines. Specifically, we use BERT
embeddings~\cite{devlin-etal-2019-bert} and OpenAI text
embeddings~\cite{openai_embeddings_2024} as fixed document representations,
followed by LOF \cite{lof}, DeepSVDD \cite{pmlr-v80-ruff18a}, ECOD \cite{9737003}, Isolation Forest \cite{iforest}, SO-GAAL \cite{sogaal}, AE \cite{aggarwal2017outlier}, VAE \cite{kingma2013autoencoding}, and LUNAR~\cite{Goodge_Hooi_Ng_Ng_2022}. We report ROC-AUC and PR-AUC: ROC-AUC measures global ranking quality, and PR-AUC summarizes the precision--recall trade-off and is especially informative under class imbalance. Detailed definitions are provided in Appendix~\ref{app:metric}.



\begin{table}[t]
\centering
\caption{Average ranks on tabular datasets over five random seeds. 
Ranks are computed separately for ROC-AUC and PR-AUC across datasets, and the overall rank is the average of the two metric-specific ranks.}
\label{tab:avg_ranks}
\vspace{-0.1in}
\begin{adjustbox}{max width=0.7\linewidth}
\begin{tabular}{lccc}
\toprule
Method & Overall Rank   & ROC-AUC Rank  & PR-AUC Rank   \\
\midrule
Parametric \algname{} & \textbf{3.929} & \textbf{3.714} & \textbf{4.143} \\
Non-parametric \algname{} & 4.929 & 4.857 & 5.000 \\
DTE-Gaussian & 5.286 & 4.929 & 5.643 \\
Hamming-kNN & 6.250 & 6.000 & 6.500 \\
DTE-Categorical & 6.357 & 5.214 & 7.500 \\
DRL & 7.429 & 7.857 & 7.000 \\
ECOD & 7.857 & 7.429 & 8.286 \\
COPOD & 7.929 & 7.857 & 8.000 \\
GOAD & 8.464 & 8.500 & 8.429 \\
DTE-InvGamma & 8.571 & 9.429 & 7.714 \\
ICL & 8.786 & 9.286 & 8.286 \\
MCM & 8.821 & 9.214 & 8.429 \\
DeepSVDD & 9.679 & 10.786 & 8.571 \\
IForest & 10.714 & 9.929 & 11.500 \\
\bottomrule
\end{tabular}
\end{adjustbox}
\end{table}

\vspace{-0.1in}
\paragraph{Results.} Table~\ref{tab:avg_ranks}, summarized from the results in
Tables~\ref{tab:rocauc_seed_avg} and~\ref{tab:prauc_seed_avg} in the Appendix, shows that our reconstruction-based variants are overall the strongest on tabular data. In particular, the parametric reconstruction anomaly score achieves the best average overall rank (\(3.93\)), and the non-parametric variant also ranks among the top methods. Among the baselines, \texttt{Hamming-kNN} is the strongest competitor, whereas \texttt{DTE-Gaussian} is the best-performing DTE variant. 
It is worthwhile mentioning that the DTE variants method compared here is implemented by embedding the discrete features into a continuous space and applying Gaussian DTE there. This represents a natural application of the DTE method in discrete data; yet this no longer defines a diffusion process on the original categorical space. The resulting time posterior depends on the geometry induced by the chosen embedding and may not faithfully preserve the distributional structure and inter-feature dependencies of normal data. In contrast, our proposed method operates directly in the discrete state space and achieves stronger overall performance, as reflected by its best average rank. This indicates that reconstruction-based scores are more reliable than time-based scores in the tabular setting. 
\begin{table}[h!]
\centering
\caption{Anomaly detection performance on NLP datasets.}
\label{tab:nlp_adbench_spam_results}
\begin{adjustbox}{max width=0.99\textwidth}
\begin{tabular}{lcccccccc}
\toprule
Method 
& \multicolumn{2}{c}{AGNews} 
& \multicolumn{2}{c}{Email Spam} 
& \multicolumn{2}{c}{SMS Spam} 
& \multicolumn{2}{c}{YelpReview}\\
\cmidrule(lr){2-3} \cmidrule(lr){4-5} \cmidrule(lr){6-7} \cmidrule(lr){8-9}
& ROC-AUC & PR-AUC 
& ROC-AUC & PR-AUC 
& ROC-AUC & PR-AUC 
& ROC-AUC & PR-AUC\\
\midrule
Parametric \algname{}
& 76.03 & 31.11 
& \textbf{97.55} & \textbf{88.87}
& \textbf{95.69} & \textbf{72.15} 
& 58.82 & 20.42 \\
\midrule
COPOD              & 50.91 & 11.46 & 53.46 & 18.31 & 86.77 & 31.76 & 54.54 & 18.98 \\
ECOD               & 49.19 & 11.06 & 55.61 & 19.39 & 86.07 & 31.31 & 54.65 & 19.02 \\
Hamming-kNN        & 50.64 & 11.80 & 56.68 & 27.11 & 87.26 & 32.70 & 58.55 & 21.76 \\
Isolation Forest   & 52.30 & 11.90 & 57.65 & 16.61 & 88.47 & 34.75 & 52.33 & 11.92 \\
\midrule
CVDD               & 60.46 & 12.96 & 93.40 & 53.53 & 47.82 & 7.12  & 53.45 & 17.11 \\
DATE               & 81.20 & 39.96 & 96.97 & 88.85 & 93.98 & 61.12 & 60.92 & 21.49 \\
FATE               & 77.56 & 27.87 & 90.61 & 55.29 & 62.62 & 12.57 & 59.45 & 21.12 \\
\midrule
BERT + AE          & 72.00 & 22.32 & 47.39 & 29.37 & 69.18 & 19.14 & 64.41 & 25.25 \\
BERT + DeepSVDD    & 66.71 & 21.60 & 69.37 & 21.17 & 58.59 & 11.78 & 58.71 & 21.74 \\
BERT + ECOD        & 63.18 & 16.16 & 70.52 & 20.77 & 56.06 & 11.56 & 63.26 & 21.97 \\
BERT + iForest     & 61.24 & 15.59 & 67.79 & 18.94 & 50.53 & 9.94  & 59.71 & 22.03 \\
BERT + LOF         & 74.32 & 25.49 & 74.82 & 23.70 & 71.90 & 18.37 & 65.73 & 26.29 \\
BERT + LUNAR       & 76.94 & 27.17 & 84.17 & 35.71 & 69.53 & 18.17 & 65.22 & 26.09 \\
BERT + SO-GAAL     & 44.89 & 10.33 & 44.40 & 11.30 & 33.28 & 7.14  & 47.12 & 24.40 \\
BERT + VAE         & 67.73 & 18.78 & 47.37 & 22.47 & 60.82 & 13.60 & 64.41 & 23.31 \\
\midrule
OpenAI + AE        & 83.26 & 40.22 & 76.51 & 55.80 & 55.11 & 10.30 & 85.24 & 70.63 \\
OpenAI + DeepSVDD  & 46.80 & 10.62 & 44.15 & 11.95 & 34.91 & 7.21  & 53.73 & 18.93 \\
OpenAI + ECOD      & 76.38 & 32.94 & 92.63 & 55.97 & 43.17 & 8.21  & 59.84 & \textbf{86.39} \\
OpenAI + iForest   & 52.13 & 12.78 & 69.37 & 32.83 & 37.51 & 7.72  & 58.71 & 25.27 \\
OpenAI + LOF       & 89.05 & 54.43 & 92.63 & 59.67 & 78.62 & 24.50 & 87.33 & 57.10 \\
OpenAI + LUNAR     & \textbf{92.26} & \textbf{69.18} & 93.43 & 58.10 & 71.89 & 16.40 & \textbf{94.52} & 45.24 \\
OpenAI + SO-GAAL   & 59.45 & 15.38 & 44.40 & 10.96 & 56.71 & 12.13 & 50.82 & 27.35 \\
OpenAI + VAE       & 81.44 & 36.59 & 52.73 & 56.04 & 42.59 & 8.12  & 61.63 & 84.67 \\
\bottomrule
\end{tabular}
\end{adjustbox}
\vspace{-0.03in}
\end{table}

For text data, Table~\ref{tab:nlp_adbench_spam_results} shows that the proposed parametric reconstruction score is especially effective on short
spam-detection datasets. It achieves the best performance on Email Spam and SMS Spam, outperforming both end-to-end text anomaly detectors such as DATE and embedding-based baselines. On AGNews and YelpReview, however, our method is less competitive than the strongest OpenAI-embedding baselines, suggesting that long or semantically richer text sequences may require more specialized sequence architectures or text-specific training. Overall, these results indicate that masked reconstruction serves as a promising signal for identifying anomalous discrete sequences, while also highlighting an important direction for improving \algname{} on longer text data.

\begin{figure}[ht!]
\centering 
\includegraphics[width=0.6\linewidth]{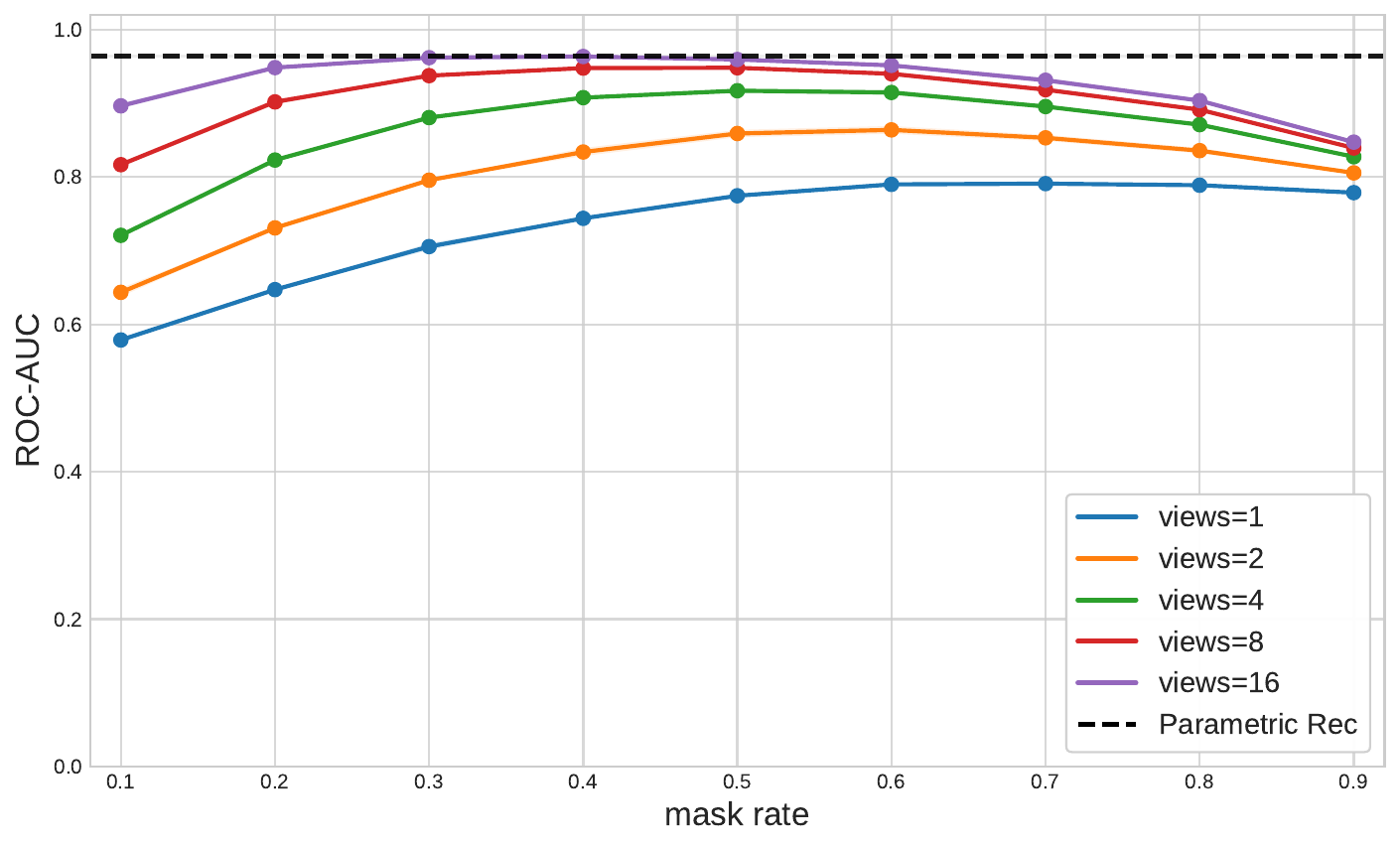}
\vspace{-0.1in}
    \caption{Sensitivity of the ROC-AUC to the probe mask rate on the Vehicle Claims dataset.
    Each solid curve uses a single probe mask rate $\tau$ and averages over
    $K\in\{1,2,4,8,16\}$ masked views.
    The dashed line denotes our default multi-probe parametric reconstruction score, which aggregates over the full uniform probe grid.
    }
    \label{fig:probe_tau_sensitivity}   
\end{figure}

\vspace{-0.1in}
\paragraph{Sensitivity analysis on probe level $\tau$.}
We further study the effect of the probe level using the Vehicle Claims dataset. Figure~\ref{fig:probe_tau_sensitivity} reports the ROC-AUC of the proposed parametric score when computed from a single probe mask rate $\tau$.
The ROC-AUC first increases at small $\tau$ 
and then decreases when too much context is removed.
This suggests a trade-off: at small $\tau$ the algorithm masks too few coordinates to expose anomalous dependencies, while at large $\tau$ the missing context makes reconstruction difficult for both normal and anomalous samples.
Additional results in Appendix~\ref{app:sensitivity_analysis} show that the best single probe level can differ across datasets.
Therefore, we use a fixed multi-probe score in the main experiments, aggregating over a uniform grid of probe levels instead of tuning $\tau$ separately for each dataset.

\section{Conclusion and Discussions}

We proposed \algname{}, a masked diffusion-based anomaly detection framework for discrete data. The method uses a trained masked diffusion model for anomaly scoring: a test sample is
randomly masked at multiple probe levels, and its anomaly score is the average reconstruction surprisal of the masked entries. This design directly leverages the discrete structure of the data and is computationally efficient to compute. Our theoretical analysis provided Type-I and Type-II error insights under well-specified reconstruction models, and our experiments showed competitive
performance across eighteen tabular and text datasets. One limitation is that the method depends on the quality of the learned masked
conditionals and on the choice of probe levels, although multi-probe averaging
helps to reduce this sensitivity. In addition, its performance on long text
sequences is currently less competitive, which may be partly due to the simple text modeling and token-level architecture used in our experiments. Future work includes
adaptive selection of probe levels, text-specific architectures and training
strategies for long sequences, extensions to structured discrete objects such as graphs and molecules, and tighter theory connecting masked reconstruction with distributional divergences between nominal and anomalous data.


\bibliographystyle{unsrt}
\bibliography{ref}

@inproceedings{
shi2024simplified,
title={Simplified and Generalized Masked Diffusion for Discrete Data},
author={Jiaxin Shi and Kehang Han and Zhe Wang and Arnaud Doucet and Michalis Titsias},
booktitle={The Thirty-eighth Annual Conference on Neural Information Processing Systems},
year={2024},
url={https://openreview.net/forum?id=xcqSOfHt4g}
}

@inproceedings{
prabhudesai2026diffusion,
title={Diffusion Beats Autoregressive in Data-Constrained Settings},
author={Mihir Prabhudesai and Mengning Wu and Amir Zadeh and Katerina Fragkiadaki and Deepak Pathak},
booktitle={The Thirty-ninth Annual Conference on Neural Information Processing Systems},
year={2025},
url={https://openreview.net/forum?id=W5Ht05jF4c}
}

@article{cardei2025constrained,
	author = {Michael Cardei and Jacob K Christopher and Thomas Hartvigsen and Bhavya Kailkhura and Ferdinando Fioretto},
	journal = {arXiv preprint arXiv:2503.09790},
	title = {Constrained Discrete Diffusion},
	year = {2025}}

@inproceedings{salem2013sensor,
  title={Sensor fault and patient anomaly detection and classification in medical wireless sensor networks},
  author={Salem, Osman and Guerassimov, Alexey and Mehaoua, Ahmed and Marcus, Anthony and Furht, Borko},
  booktitle={2013 IEEE International Conference on Communications (ICC)},
  pages={4373--4378},
  year={2013},
  organization={IEEE}
}

@inproceedings{yairi2006telemetry,
  title={Telemetry-mining: a machine learning approach to anomaly detection and fault diagnosis for space systems},
  author={Yairi, Takehisa and Kawahara, Yoshinobu and Fujimaki, Ryohei and Sato, Yuichi and Machida, Kazuo},
  booktitle={2nd IEEE International Conference on Space Mission Challenges for Information Technology (SMC-IT'06)},
  pages={8--pp},
  year={2006},
  organization={IEEE}
}

@article{fraser2022challenges,
  title={Challenges for unsupervised anomaly detection in particle physics},
  author={Fraser, Katherine and Homiller, Samuel and Mishra, Rashmish K and Ostdiek, Bryan and Schwartz, Matthew D},
  journal={Journal of High Energy Physics},
  volume={2022},
  number={3},
  pages={66},
  year={2022},
  publisher={Springer}
}

@article{susto2017anomaly,
  title={Anomaly detection approaches for semiconductor manufacturing},
  author={Susto, Gian Antonio and Terzi, Matteo and Beghi, Alessandro},
  journal={Procedia Manufacturing},
  volume={11},
  pages={2018--2024},
  year={2017},
  publisher={Elsevier}
}

@article{ahmed2016survey,
  title={A survey of anomaly detection techniques in financial domain},
  author={Ahmed, Mohiuddin and Mahmood, Abdun Naser and Islam, Md Rafiqul},
  journal={Future Generation Computer Systems},
  volume={55},
  pages={278--288},
  year={2016},
  publisher={Elsevier}
}

@article{pachauri2015anomaly,
  title={Anomaly detection in medical wireless sensor networks using machine learning algorithms},
  author={Pachauri, Girik and Sharma, Sandeep},
  journal={Procedia Computer Science},
  volume={70},
  pages={325--333},
  year={2015},
  publisher={Elsevier}
}

@inproceedings{zheng2025fhs,
	author = {Zheng, Kaiwen and Chen, Yongxin and Mao, Hanzi and Liu, Ming-Yu and Zhu, Jun and Zhang, Qinsheng},
	booktitle = {International Conference on Learning Representations (ICLR)},
	title = {Masked Diffusion Models Are Secretly Time-Agnostic Masked Models and Exploit Inaccurate Categorical Sampling},
	year = {2025}}

@article{uehara2024survey-finetuning,
	author = {Masatoshi Uehara and Yulai Zhao and Tommaso Biancalani and Sergey Levine},
	journal = {arXiv preprint arXiv:2407.13734},
	title = {Understanding Reinforcement Learning-Based Fine-Tuning of Diffusion Models: A Tutorial and Review},
	year = {2024}}

@inproceedings{
livernoche2024dte,
title={On Diffusion Modeling for Anomaly Detection},
author={Victor Livernoche and Vineet Jain and Yashar Hezaveh and Siamak Ravanbakhsh},
booktitle={The Twelfth International Conference on Learning Representations},
year={2024},
url={https://openreview.net/forum?id=lR3rk7ysXz}
}

@article{han2022adbench,
  title={{Adbench}: Anomaly detection benchmark},
  author={Han, Songqiao and Hu, Xiyang and Huang, Hailiang and Jiang, Minqi and Zhao, Yue},
  journal={Advances in Neural Information Processing Systems},
  volume={35},
  pages={32142--32159},
  year={2022}
}

@article{sahoo2024simple,
  title={Simple and effective masked diffusion language models},
  author={Sahoo, Subham S and Arriola, Marianne and Schiff, Yair and Gokaslan, Aaron and Marroquin, Edgar and Chiu, Justin T and Rush, Alexander and Kuleshov, Volodymyr},
  journal={Advances in Neural Information Processing Systems},
  volume={37},
  pages={130136--130184},
  year={2024}
}

@article{austin2021structured,
  title={Structured denoising diffusion models in discrete state-spaces},
  author={Austin, Jacob and Johnson, Daniel D and Ho, Jonathan and Tarlow, Daniel and Van Den Berg, Rianne},
  journal={Advances in Neural Information Processing Systems},
  volume={34},
  pages={17981--17993},
  year={2021}
}

@article{ho2020denoising,
  title={Denoising diffusion probabilistic models},
  author={Ho, Jonathan and Jain, Ajay and Abbeel, Pieter},
  journal={Advances in Neural Information Processing Systems},
  volume={33},
  pages={6840--6851},
  year={2020}
}

@inproceedings{sohl2015deep,
  title={Deep unsupervised learning using nonequilibrium thermodynamics},
  author={Sohl-Dickstein, Jascha and Weiss, Eric and Maheswaranathan, Niru and Ganguli, Surya},
  booktitle={International conference on machine learning},
  pages={2256--2265},
  year={2015},
  organization={pmlr}
}

@article{taha2019anomaly,
  title={Anomaly detection methods for categorical data: A review},
  author={Taha, Ayman and Hadi, Ali S},
  journal={ACM Computing Surveys (CSUR)},
  volume={52},
  number={2},
  pages={1--35},
  year={2019},
  publisher={ACM New York, NY, USA}
}

@article{ruff2021unifying,
  title={A unifying review of deep and shallow anomaly detection},
  author={Ruff, Lukas and Kauffmann, Jacob R and Vandermeulen, Robert A and Montavon, Gr{\'e}goire and Samek, Wojciech and Kloft, Marius and Dietterich, Thomas G and M{\"u}ller, Klaus-Robert},
  journal={Proceedings of the IEEE},
  volume={109},
  number={5},
  pages={756--795},
  year={2021},
  publisher={IEEE}
}

@article{chandola2009anomaly,
  title={Anomaly detection: A survey},
  author={Chandola, Varun and Banerjee, Arindam and Kumar, Vipin},
  journal={ACM computing surveys (CSUR)},
  volume={41},
  number={3},
  pages={1--58},
  year={2009},
  publisher={ACM New York, NY, USA}
}

@article{chawda2022uadad,
  title={Unsupervised anomaly detection for auditing data and impact of categorical encodings},
  author={Chawda, Ajay and Grimm, Stefanie and Kloft, Marius},
  journal={arXiv preprint arXiv:2210.14056},
  year={2022}
}

@article{li2025nlp,
  title={Nlp-adbench: Nlp anomaly detection benchmark},
  author={Li, Yuangang and Li, Jiaqi and Xiao, Zhuo and Yang, Tiankai and Nian, Yi and Hu, Xiyang and Zhao, Yue},
  journal={Findings of the Association for Computational Linguistics: EMNLP 2025},
  year={2025}
}

@article{pang2021deep,
  title={Deep learning for anomaly detection: A review},
  author={Pang, Guansong and Shen, Chunhua and Cao, Longbing and Hengel, Anton Van Den},
  journal={ACM Computing Surveys (CSUR)},
  volume={54},
  number={2},
  pages={1--38},
  year={2021},
  publisher={ACM New York, NY, USA}
}

@ARTICLE{9737003,
  author={Li, Zheng and Zhao, Yue and Hu, Xiyang and Botta, Nicola and Ionescu, Cezar and Chen, George H.},
  journal={IEEE Transactions on Knowledge and Data Engineering}, 
  title={{ECOD}: Unsupervised Outlier Detection Using Empirical Cumulative Distribution Functions}, 
  year={2023},
  volume={35},
  number={12},
  pages={12181-12193},
  doi={10.1109/TKDE.2022.3159580}}

@article{knn,
title = "Efficient algorithms for mining outliers from large data sets",
author = "Sridhar Ramaswamy and Rajeev Rastogi and Kyuseok Shim",
year = "2000",
month = jun,
doi = "10.1145/335191.335437",
language = "English",
volume = "29",
pages = "427--438",
journal = "SIGMOD Record",
issn = "0163-5808",
publisher = "Association for Computing Machinery (ACM)",
number = "2",
}

@inproceedings{iforest, author = {Liu, Fei Tony and Ting, Kai Ming and Zhou, Zhi-Hua}, title = {Isolation Forest}, year = {2008}, isbn = {9780769535029}, publisher = {IEEE Computer Society}, address = {USA}, url = {https://doi.org/10.1109/ICDM.2008.17}, doi = {10.1109/ICDM.2008.17}, booktitle = {Proceedings of the 2008 Eighth IEEE International Conference on Data Mining}, pages = {413–422}, numpages = {10}, series = {ICDM '08} }

@inproceedings{copod,
   title={COPOD: Copula-Based Outlier Detection},
   url={http://dx.doi.org/10.1109/ICDM50108.2020.00135},
   DOI={10.1109/icdm50108.2020.00135},
   booktitle={2020 IEEE International Conference on Data Mining (ICDM)},
   publisher={IEEE},
   author={Li, Zheng and Zhao, Yue and Botta, Nicola and Ionescu, Cezar and Hu, Xiyang},
   year={2020},
   month=nov, pages={1118–1123} }

@inproceedings{manolache-etal-2021-date,
    title = "{DATE}: Detecting Anomalies in Text via Self-Supervision of Transformers",
    author = "Manolache, Andrei  and
      Brad, Florin  and
      Burceanu, Elena",
    editor = "Toutanova, Kristina  and
      Rumshisky, Anna  and
      Zettlemoyer, Luke  and
      Hakkani-Tur, Dilek  and
      Beltagy, Iz  and
      Bethard, Steven  and
      Cotterell, Ryan  and
      Chakraborty, Tanmoy  and
      Zhou, Yichao",
    booktitle = "Proceedings of the 2021 Conference of the North American Chapter of the Association for Computational Linguistics: Human Language Technologies",
    month = jun,
    year = "2021",
    address = "Online",
    publisher = "Association for Computational Linguistics",
    url = "https://aclanthology.org/2021.naacl-main.25/",
    doi = "10.18653/v1/2021.naacl-main.25",
    pages = "267--277"
}

@article{Goodge_Hooi_Ng_Ng_2022, title={LUNAR: Unifying Local Outlier Detection Methods via Graph Neural Networks}, volume={36}, url={https://ojs.aaai.org/index.php/AAAI/article/view/20629}, DOI={10.1609/aaai.v36i6.20629}, abstractNote={Many well-established anomaly detection methods use the distance of a sample to those in its local neighbourhood: so-called `local outlier methods’, such as LOF and DBSCAN. They are popular for their simple principles and strong performance on unstructured, feature-based data that is commonplace in many practical applications. However, they cannot learn to adapt for a particular set of data due to their lack of trainable parameters. In this paper, we begin by unifying local outlier methods by showing that they are particular cases of the more general message passing framework used in graph neural networks. This allows us to introduce learnability into local outlier methods, in the form of a neural network, for greater flexibility and expressivity: specifically, we propose LUNAR, a novel, graph neural network-based anomaly detection method. LUNAR learns to use information from the nearest neighbours of each node in a trainable way to find anomalies. We show that our method performs significantly better than existing local outlier methods, as well as state-of-the-art deep baselines. We also show that the performance of our method is much more robust to different settings of the local neighbourhood size.}, number={6}, journal={Proceedings of the AAAI Conference on Artificial Intelligence}, author={Goodge, Adam and Hooi, Bryan and Ng, See-Kiong and Ng, Wee Siong}, year={2022}, month={Jun.}, pages={6737-6745} }

@inproceedings{devlin-etal-2019-bert,
    title = "{BERT}: Pre-training of Deep Bidirectional Transformers for Language Understanding",
    author = "Devlin, Jacob  and
      Chang, Ming-Wei  and
      Lee, Kenton  and
      Toutanova, Kristina",
    editor = "Burstein, Jill  and
      Doran, Christy  and
      Solorio, Thamar",
    booktitle = "Proceedings of the 2019 Conference of the North {A}merican Chapter of the Association for Computational Linguistics: Human Language Technologies, Volume 1 (Long and Short Papers)",
    month = jun,
    year = "2019",
    address = "Minneapolis, Minnesota",
    publisher = "Association for Computational Linguistics",
    url = "https://aclanthology.org/N19-1423/",
    doi = "10.18653/v1/N19-1423",
    pages = "4171--4186"
}

@misc{openai_embeddings_2024,
  author       = {{OpenAI}},
  title        = {New embedding models and API updates},
  howpublished = {\url{https://openai.com/index/new-embedding-models-and-api-updates/}}
}

@inproceedings{
yin2024mcm,
title={{MCM}: Masked Cell Modeling for Anomaly Detection in Tabular Data},
author={Jiaxin Yin and Yuanyuan Qiao and Zitang Zhou and Xiangchao Wang and Jie Yang},
booktitle={The Twelfth International Conference on Learning Representations},
year={2024},
url={https://openreview.net/forum?id=lNZJyEDxy4}
}

@InProceedings{pmlr-v235-thimonier24a,
    title = {Beyond Individual Input for Deep Anomaly Detection on Tabular Data},
    author = {Thimonier, Hugo and Popineau, Fabrice and Rimmel, Arpad and Doan, Bich-Li\^{e}n},
    booktitle = {Proceedings of the 41st International Conference on Machine Learning},
    pages = {48097--48123},
    year = {2024},
    volume = {235},
    series = {Proceedings of Machine Learning Research},
    month =  {21--27 Jul},
    publisher = {PMLR},
    }

@inproceedings{
shenkar2022anomaly,
title={Anomaly Detection for Tabular Data with Internal Contrastive Learning},
author={Tom Shenkar and Lior Wolf},
booktitle={International Conference on Learning Representations},
year={2022},
url={https://openreview.net/forum?id=_hszZbt46bT}
}

@inproceedings{comprex,
author = {Akoglu, Leman and Tong, Hanghang and Vreeken, Jilles and Faloutsos, Christos},
title = {Fast and reliable anomaly detection in categorical data},
year = {2012},
isbn = {9781450311564},
publisher = {Association for Computing Machinery},
address = {New York, NY, USA},
url = {https://doi.org/10.1145/2396761.2396816},
doi = {10.1145/2396761.2396816},
booktitle = {Proceedings of the 21st ACM International Conference on Information and Knowledge Management},
pages = {415–424},
numpages = {10},
location = {Maui, Hawaii, USA},
series = {CIKM '12}
}

@inproceedings{
ye2025drl,
title={{DRL}: Decomposed Representation Learning for Tabular Anomaly Detection},
author={Hangting Ye and He Zhao and Wei Fan and Mingyuan Zhou and Dan dan Guo and Yi Chang},
booktitle={The Thirteenth International Conference on Learning Representations},
year={2025},
url={https://openreview.net/forum?id=CJnceDksRd}
}

@inproceedings{lof,
  title={LOF: identifying density-based local outliers},
  author={Breunig, Markus M and Kriegel, Hans-Peter and Ng, Raymond T and Sander, J{\"o}rg},
  booktitle={Proceedings of the 2000 ACM SIGMOD international conference on Management of data},
  pages={93--104},
  year={2000}
}

@InProceedings{pmlr-v80-ruff18a,
  title = 	 {Deep One-Class Classification},
  author =       {Ruff, Lukas and Vandermeulen, Robert and Goernitz, Nico and Deecke, Lucas and Siddiqui, Shoaib Ahmed and Binder, Alexander and M{\"u}ller, Emmanuel and Kloft, Marius},
  booktitle = 	 {Proceedings of the 35th International Conference on Machine Learning},
  pages = 	 {4393--4402},
  year = 	 {2018},
  editor = 	 {Dy, Jennifer and Krause, Andreas},
  volume = 	 {80},
  series = 	 {Proceedings of Machine Learning Research},
  month = 	 {10--15 Jul},
  publisher =    {PMLR},
  url = 	 {https://proceedings.mlr.press/v80/ruff18a.html}
}

@article{sogaal,
  title={Generative adversarial active learning for unsupervised outlier detection},
  author={Liu, Yezheng and Li, Zhe and Zhou, Chong and Jiang, Yuanchun and Sun, Jianshan and Wang, Meng and He, Xiangnan},
  journal={IEEE Transactions on Knowledge and Data Engineering},
  volume={32},
  number={8},
  pages={1517--1528},
  year={2019},
  publisher={IEEE}
}

@inproceedings{das2023fewshotanomalydetectiontext,
  title={Few-shot anomaly detection in text with deviation learning},
  author={Das, Anindya Sundar and Ajay, Aravind and Saha, Sriparna and Bhuyan, Monowar},
  booktitle={International Conference on Neural Information Processing},
  pages={425--438},
  year={2023},
  organization={Springer}
}

@article{kingma2013autoencoding,
  title         = {Auto-Encoding Variational Bayes},
  author        = {Kingma, Diederik P. and Welling, Max},
  journal       = {arXiv preprint arXiv:1312.6114},
  year          = {2013},
  eprint        = {1312.6114},
  archivePrefix = {arXiv},
  primaryClass  = {stat.ML},
  doi           = {10.48550/arXiv.1312.6114}
}

@book{aggarwal2017outlier,
  title     = {Outlier Analysis},
  author    = {Aggarwal, Charu C.},
  edition   = {2},
  publisher = {Springer},
  year      = {2017},
  doi       = {10.1007/978-3-319-47578-3}
}

@inproceedings{bergman2020goad,
  author    = {Liron Bergman and Yedid Hoshen},
  title     = {Classification-Based Anomaly Detection for General Data},
  booktitle = {International Conference on Learning Representations (ICLR)},
  year      = {2020}
}

@article{radford2019language,
  title={Language models are unsupervised multitask learners},
  author={Radford, Alec and Wu, Jeffrey and Child, Rewon and Luan, David and Amodei, Dario and Sutskever, Ilya and others},
  journal={OpenAI blog},
  volume={1},
  number={8},
  pages={9},
  year={2019}
}

@inproceedings{wyatt2022anoddpm,
  title={Anoddpm: Anomaly detection with denoising diffusion probabilistic models using simplex noise},
  author={Wyatt, Julian and Leach, Adam and Schmon, Sebastian M and Willcocks, Chris G},
  booktitle={Proceedings of the IEEE/CVF conference on computer vision and pattern recognition},
  pages={650--656},
  year={2022}
}

@article{liu2025survey,
  title={A survey on diffusion models for anomaly detection},
  author={Liu, Jing and Ma, Zhenchao and Wang, Zepu and Zou, Chenxuanyin and Ren, Jiayang and Wang, Zehua and Song, Liang and Hu, Bo and Liu, Yang and Leung, Victor},
  journal={arXiv preprint arXiv:2501.11430},
  year={2025}
}

@inproceedings{sakurada2014anomaly,
  title={Anomaly detection using autoencoders with nonlinear dimensionality reduction},
  author={Sakurada, Mayu and Yairi, Takehisa},
  booktitle={Proceedings of the MLSDA 2014 2nd workshop on machine learning for sensory data analysis},
  pages={4--11},
  year={2014}
}

@article{an2015variational,
  title={Variational autoencoder based anomaly detection using reconstruction probability},
  author={An, Jinwon and Cho, Sungzoon},
  journal={Special lecture on IE},
  volume={2},
  number={1},
  pages={1--18},
  year={2015}
}

@inproceedings{zong2018deep,
  title={Deep autoencoding gaussian mixture model for unsupervised anomaly detection},
  author={Zong, Bo and Song, Qi and Min, Martin Renqiang and Cheng, Wei and Lumezanu, Cristian and Cho, Daeki and Chen, Haifeng},
  booktitle={International Conference on Learning Representations},
  year={2018}
}

@inproceedings{schlegl2017anogan,
  title={Unsupervised Anomaly Detection with Generative Adversarial Networks to Guide Marker Discovery},
  author={Schlegl, Thomas and Seeb{\"o}ck, Philipp and Waldstein, Sebastian M. and Schmidt-Erfurth, Ursula and Langs, Georg},
  booktitle={Information Processing in Medical Imaging},
  pages={146--157},
  year={2017},
  publisher={Springer}
}

@inproceedings{akcay2018ganomaly,
  title={Ganomaly: Semi-supervised anomaly detection via adversarial training},
  author={Akcay, Samet and Atapour-Abarghouei, Amir and Breckon, Toby P},
  booktitle={Asian conference on computer vision},
  pages={622--637},
  year={2018},
  organization={Springer}
}

@inproceedings{mousakhan2024anomaly,
  title={Anomaly detection with conditioned denoising diffusion models},
  author={Mousakhan, Arian and Brox, Thomas and Tayyub, Jawad},
  booktitle={DAGM German Conference on Pattern Recognition},
  pages={181--195},
  year={2024},
  organization={Springer}
}

@article{zhang2025diffusionad,
  title={DiffusionAD: Norm-guided one-step denoising diffusion for anomaly detection},
  author={Zhang, Hui and Wang, Zheng and Zeng, Dan and Wu, Zuxuan and Jiang, Yu-Gang},
  journal={IEEE transactions on pattern analysis and machine intelligence},
  year={2025},
  publisher={IEEE}
}

@inproceedings{rudolph2021same,
  title={Same same but differnet: Semi-supervised defect detection with normalizing flows},
  author={Rudolph, Marco and Wandt, Bastian and Rosenhahn, Bodo},
  booktitle={Proceedings of the IEEE/CVF winter conference on applications of computer vision},
  pages={1907--1916},
  year={2021}
}

\appendix

\section{Algorithmic Details}\label{app:alg}

In this section, we provide the detailed algorithms for the two variants mentioned in Section~\ref{sec:nonpara}. Algorithm~\ref{alg:rec_scoring_nonpara} summarizes the non-parametric version of the \algname{} using the non-parametric score computed from Eq.~\eqref{eq:nonpara}. Algorithm~\ref{alg:rec_training} summarizes the training procedure when using the alternative training objective in Eq.~\eqref{eq:para_rec_loss_view_level}.

\begin{algorithm}[ht!]
\caption{Non-Parametric \algname{}}
\label{alg:rec_scoring_nonpara}
\begin{algorithmic}[1]
\Require Test sample \(\bx\); normal-only training data \(\mathcal D_{\rm tr}\); probe grids \(\{\tau_\ell\}_{\ell=1}^L\); number of views per mask level \(K\); detection threshold \(\gamma\).
\Ensure Anomaly label $\hat y(\bx)\in\{0,1\}$, where 1 indicates anomalous and 0 indicates normal.
\For{each \(\ell=1,\ldots,L\) and \(k=1,\ldots,K\)}
        \State Sample the corresponding probe-masked view
        $\widetilde \bx^{(k,\ell)}\sim q_{\tau_\ell|0}(\cdot\mid \bx)$ according to Eq.~\eqref{eq:probe}.
        \State Compute per-view reconstruction difficulty
        \[
        s^{\rm NP}_{\rm rec}\bigl(\bx;\widetilde \bx^{(\ell,k)}\bigr) = -\frac{
        1}{
        |M(\widetilde \bx^{(\ell,k)})|\vee 1} \sum_{j\in M(\widetilde \bx^{(\ell,k)})}
        \log \widehat p_{\rm NP}^j(x_j\mid \widetilde \bx^{(\ell,k)})
        \]
        where $\widehat p_{\mathrm{NP}}^j(\cdot \mid \widetilde \bx)$ is as in Eq.~\eqref{eq:nonpara}.
\EndFor
\State Compute the aggregated anomaly score 
\[
S^{\rm NP}_\rec(\bx)
    :=
    \frac{1}{LK}
    \sum_{\ell=1}^L
    \sum_{k=1}^K
    s^{\rm NP}_{\rm rec}\bigl(\bx;\widetilde \bx^{(\ell,k)}\bigr).
\]
\State \Return Anomaly label $\hat y(\bx)=\mathbf 1\{S^{\rm NP}_\rec(\bx)>\gamma\}$.
\end{algorithmic}
\end{algorithm}

\begin{algorithm}[ht!]
\caption{Training of Alternative Parametric Reconstruction Model}
\label{alg:rec_training}
\begin{algorithmic}[1]
\Require Normal training set \(D_{\mathrm{tr}}=\{\bx^{(n)}\}_{n=1}^N\), probe grids rates \(\{\tau_\ell\}_{\ell=1}^L\), total epochs \(E\), learning rate \(\eta\).
\Ensure Trained reconstruction model \(p_{\widehat{\theta}_{\mathrm{rec}}}^j(\cdot \mid \widetilde \bx),j=1,\ldots,d.\)

\State Initialize reconstruction model
$p_{\widehat{\theta}_{\mathrm{rec}}}^j$

\For{\(\text{epoch}=1,\ldots,E\)}
    \For{each mask level \(\ell=1,\ldots,L\)}
        \State Sample masked views 
        $\widetilde{\bx}^{(\ell,n)} \sim q_{\tau_\ell|0}(\cdot\mid \bx^{(n)}),n=1,\ldots,N$
    \EndFor

    \State Compute the reconstruction loss $\mathcal L_{\mathrm{rec}}(\theta_{\mathrm{rec}})$ according to \eqref{eq:para_rec_loss_view_level}

    \State Update parameters 
    $\theta_{\mathrm{rec}}
    \gets
    \theta_{\mathrm{rec}}
    -
    \eta\nabla_{\theta_{\mathrm{rec}}}
    \mathcal L_{\mathrm{rec}}(\theta_{\mathrm{rec}})$
\EndFor
\end{algorithmic}
\end{algorithm}

\section{Experiment Details}

\subsection{Datasets and Pre-processing}\label{sec:datasets}

Table~\ref{tab:datasets} summarizes the eighteen datasets we used for evaluating the proposed method. 
The datasets include thirteen categorical tabular datasets from ADBench~\cite{pang2021deep},
one mixed-type vehicle claims dataset from UADAD~\cite{chawda2022uadad},
and four text anomaly detection datasets from NLP-ADBench~\cite{li2025nlp}.
All experiments follow the normal-only anomaly detection protocol: anomaly labels are used only
for constructing stratified splits and for test-time evaluation, but are never used for training the
proposed method.


\begin{table}[ht!] 
\centering
\caption{Datasets overview.}
\label{tab:datasets}
\begin{adjustbox}{max width=0.75\linewidth}
\begin{tabular}{lcccc}
\toprule
Dataset Name & \# Samples & \# Features & \% Anomaly & Data Type \\
\midrule
AD \cite{pang2021deep} & 3,279 & 1,555 & 14.00\% & Categorical\\
AID362 \cite{pang2021deep} & 4,279 & 114 & 1.40\% & Categorical\\
APAS \cite{pang2021deep} & 12,695 & 64 & 1.39\% & Categorical\\
bank \cite{pang2021deep} & 41,188 & 10 & 11.27\% & Categorical\\
census \cite{pang2021deep} & 299,285 & 33 & 6.20\% & Categorical\\
Chess \cite{pang2021deep} & 28,056 & 6 & 0.10\% & Categorical\\
CMC \cite{pang2021deep} & 1,473 & 8 & 1.97\% & Categorical\\
CoverType \cite{pang2021deep} & 581,012 & 44 & 0.47\% & Categorical\\
probe \cite{pang2021deep} & 64,759 & 6 & 6.43\% & Categorical\\
R10 \cite{pang2021deep} & 12,897 & 100 & 1.84\% & Categorical\\
Solar \cite{pang2021deep} & 1,066 & 11 & 4.03\% & Categorical\\
U2R \cite{pang2021deep} & 60,821 & 6 & 0.38\% & Categorical\\
Vehicle claims \cite{chawda2022uadad}& 268,255 & 18 & 21.15\% & Mixed-type \\
w7a \cite{pang2021deep} & 49,749 & 300 & 2.97\% & Categorical\\

\midrule
AGNews \cite{li2025nlp}  & 98,207 & 128 & 3.85\% & NLP \\
Email spam \cite{li2025nlp} & 3,578 & 256 & 4.08\% & NLP \\
SMS spam \cite{li2025nlp} & 4,672 & 128 & 3.30\% & NLP \\
YelpReview \cite{li2025nlp} & 316,924 & 256 & 5.66\% & NLP \\
\bottomrule
\end{tabular}
\end{adjustbox}
\end{table}

\paragraph{UADAD.} 
We use the labeled vehicle claims dataset from the UADAD benchmark~\cite{chawda2022uadad}.
Following the main feature setting of that benchmark, we retain the following attributes:
\[
\begin{gathered}
\texttt{Maker},\ \texttt{Genmodel},\ \texttt{Color},\ \texttt{Reg\_year},\ 
\texttt{Bodytype},\ \texttt{Runned\_Miles},\\
\texttt{Engin\_size},\ \texttt{Gearbox},\ \texttt{Fuel\_type},\ 
\texttt{Price},\ \texttt{Seat\_num},\ \texttt{Door\_num},\\
\texttt{issue},\ \texttt{issue\_id},\ \texttt{repair\_complexity},\ 
\texttt{repair\_hours},\ \texttt{repair\_cost}.
\end{gathered}
\]
The column \texttt{Label} is used only for splitting and evaluation, where
\(\texttt{Label}=0\) denotes a normal sample and \(\texttt{Label}=1\) denotes an anomalous sample.
We randomly split the full labeled dataset into \(70\%\) training and \(30\%\) testing
using a stratified split with respect to \texttt{Label}. To match the unsupervised anomaly detection setting, only normal samples with \(\texttt{Label}=0\) in the training split are used to train the model or build the reference set. The full test split, containing both normal and anomalous samples, is used for evaluation.

We treat the vehicle claims dataset as a mixed-type tabular dataset. Categorical attributes are kept as discrete symbols and encoded into integer IDs. Numerical attributes are discretized by quantile binning, where the bin boundaries are fitted using only the normal training samples and then applied to the test set. This yields a fully discrete representation for \algname{} and the corresponding discrete baselines.


\paragraph{ADBench.} 
For the ADBench datasets~\cite{pang2021deep}, we also randomly split each full labeled dataset into \(70\%\) training and \(30\%\) testing using a stratified split with respect to the anomaly label. After the split, all anomalous samples in the training portion are removed, and only nominal training samples are used for model training or for constructing the non-parametric reference set. The test set is kept unchanged and contains both normal and anomalous samples. Thus, anomalous samples are never observed during training, and labels are used only for evaluation.

All ADBench datasets used in our experiments are treated as categorical tabular datasets. Each feature is represented as a discrete symbol and encoded into an integer ID. The same discrete representation is used for MaskDiff-AD and for tabular baselines whenever applicable.

\paragraph{NLP-ADBench.} 

For the NLP anomaly detection datasets, we follow the official train/test split provided by NLP-ADBench~\cite{li2025nlp}, which uses \(90\%\) of the data for training and \(10\%\) for testing. As in the tabular experiments, only normal samples in the training split are used for training. The full test split, containing both normal and anomalous text samples, is used for computing anomaly scores and evaluation metrics.

For text preprocessing, we convert each raw text sample into a fixed-length
discrete sequence using the GPT-2 tokenizer~\cite{radford2019language}. Our
model does not use the pretrained GPT-2 language model or its pretrained token embeddings; the tokenizer is used only to map raw text into discrete token IDs.

Specifically, we use the GPT-2 tokenizer and apply right padding to obtain
fixed-length token sequences. Since the original GPT-2 tokenizer does not provide dedicated
padding or mask tokens, we add them when they are absent:
\[
    \langle \texttt{pad} \rangle, \qquad \langle \mask \rangle .
\]
Let \(\mathcal V\) denote the resulting tokenizer vocabulary, including the
original GPT-2 vocabulary and the newly added special tokens. Each text sample is
mapped to a sequence of token IDs in \(\{0,\ldots,|\mathcal V|-1\}\).

For the final inputs, we truncate or pad every sequence to a fixed length
\(L_{\max}\). Thus each text sample is represented as
\[
    x = (x_1,\ldots,x_{L_{\max}}),
    \qquad
    x_\ell \in \{0,\ldots,|\mathcal V|-1\}.
\]
Texts longer than \(L_{\max}\) tokens are truncated, and texts shorter than
\(L_{\max}\) tokens are padded on the right with \(\langle \texttt{pad} \rangle\).
Together with the token IDs, we store an attention mask
\[
    a = (a_1,\ldots,a_{L_{\max}}),
    \qquad
    a_\ell \in \{0,1\},
\]
where \(a_\ell=1\) indicates a valid token position and \(a_\ell=0\) indicates a padding position. Hence, after tokenization, each sample is represented by
\[
    (x,a)
    \in
    \{0,\ldots,|\mathcal V|-1\}^{L_{\max}}
    \times
    \{0,1\}^{L_{\max}}.
\]
The model is trained only on the tokenized normal examples from the training
split, while the full tokenized test split is used for evaluation. The same
tokenizer and maximum sequence length are used for both training and testing.










\subsection{Evaluation Criteria}\label{app:metric}

We evaluate anomaly detection performance using ROC-AUC and PR-AUC. 
For each test sample $\bx_i$, let $y_i\in\{0,1\}$ denote its ground-truth label, where $y_i=0$ indicates a normal sample and $y_i=1$ indicates an anomalous sample. Each method produces an anomaly score $S(\bx_i)$, where a larger score indicates stronger evidence of anomaly.

ROC-AUC measures the global ranking quality between anomalous and normal samples. Equivalently, it can be interpreted as the probability that a randomly chosen anomalous sample receives a higher score than a randomly chosen normal sample:
\[
\mathrm{AUC}_{\mathrm{ROC}}
=
\mathbb{P}\bigl(S(\bx^+) > S(\bx^-)\bigr),
\]
where $\bx^+$ denotes a randomly selected anomalous sample and $\bx^-$ denotes a randomly selected normal sample.

PR-AUC summarizes the precision--recall tradeoff obtained by thresholding the
anomaly scores. For a threshold $\gamma$, samples with $S(\bx_i)\ge \gamma$ are predicted
as anomalies, and
\[
\mathrm{Precision}(\gamma)=
\frac{\mathrm{TP}(\gamma)}{\mathrm{TP}(\gamma)+\mathrm{FP}(\gamma)},
\qquad
\mathrm{Recall}(\gamma)=
\frac{\mathrm{TP}(\gamma)}{\mathrm{TP}(\gamma)+\mathrm{FN}(\gamma)}.
\]
We report PR-AUC as average precision, which corresponds to a step-wise area under the precision--recall curve.

\subsection{\algname{} Hyperparameters}
\label{sec:hyperparameters}
For all variants of \algname{}, we use fixed hyperparameters across datasets rather than validation-based or per-dataset tuning. We adopt the standard linear absorbing schedule $\alpha_t=1-t$, so a probe level $\tau_\ell$ has mask probability $1-\alpha_{\tau_\ell}=\tau_\ell$. All reconstruction scores are averaged over the specified probe levels and independently generated masked probe views.

The parametric tabular model uses the uniform probe grid $\tau_\ell\in\{0.1,0.2,\ldots,0.9\}$ and $16$ masked probe views per probe level. Its reconstruction network uses embedding dimension $256$, hidden dimension $512$, three MLP layers, and dropout $0.1$. The model is trained for $20$ epochs with AdamW using learning rate $10^{-3}$ and weight decay $10^{-5}$.

The non-parametric tabular model uses the probe grid $\tau_\ell\in\{0.15,0.30,0.45,0.60\}$ and $8$ masked probe views per probe level. It has no trainable neural parameters and takes all normal training samples as the reference set. Masked-coordinate reconstruction probabilities are estimated using the visible-coordinate Hamming kernel $K_\lambda(\widetilde{\bx},\bx^{(n)})=\exp\{-\lambda d_{\rm vis}(\widetilde{\bx},\bx^{(n)})\}$ with bandwidth $\lambda=1$, where $d_{\rm vis}$ denotes the Hamming distance restricted to visible coordinates.

For text experiments, we use the GPT-2 tokenizer only, without using the pretrained GPT-2 language model or pretrained GPT-2 token embeddings. The text model uses the same uniform probe grid as the parametric tabular model, $\tau_\ell\in\{0.1,0.2,\ldots,0.9\}$, and $24$ masked probe views per probe level. The reconstruction model is an 8-layer Transformer encoder with hidden dimension $512$, $16$ attention heads, feed-forward dimension $2048$, and dropout $0.1$. It is trained for $20$ epochs with AdamW using learning rate $10^{-3}$ and weight decay $10^{-5}$.

\subsection{Baselines}
\label{sec:baselines}


We compare against classical anomaly detection baselines, DTE-based variants, and recent tabular or text anomaly detection methods. All baselines are evaluated under the same normal-only setting: each method is fitted only on normal training samples, and anomaly labels are used only for test-time evaluation.

\paragraph{Hyperparameter setup.}
We use a fixed default hyperparameter configuration across datasets and random seeds,
rather than performing per-dataset hyperparameter search. The main settings for all
tabular baselines are summarized in Table~\ref{tab:baseline_hyperparams}.

\paragraph{Tabular baselines.}
For categorical and mixed-type tabular datasets, categorical features are encoded
as discrete symbols. Numerical features are discretized by quantile binning using
only the normal training subset. By default, we use 10 quantile bins, treat
numerical features with at most 20 distinct values as discrete features, and use
an explicit unknown bucket for unseen categories. The fitted preprocessing
parameters are then reused for the test set.

Hamming-\(k\)NN, ECOD, and COPOD are applied to the ordinal/binned
representation. For methods whose behavior may be affected by artificial
ordering of category IDs, including Isolation Forest and neural tabular
baselines, we use the one-hot representation. DTE variants are also run on the
one-hot representation of the discretized features, so that they use the same
discretized information as the other baselines.

\paragraph{Text baselines.}
For text anomaly detection, we use datasets from NLP-ADBench~\cite{li2025nlp} and compare against the baseline results reported in that benchmark. NLP-ADBench evaluates 19 text anomaly detection baselines, including end-to-end methods
such as DATE~\cite{manolache-etal-2021-date} and
FATE~\cite{das2023fewshotanomalydetectiontext}, as well as embedding-based methods. For the embedding-based baselines, BERT embeddings~\cite{devlin-etal-2019-bert} and OpenAI text embeddings~\cite{openai_embeddings_2024} are used as fixed document
representations, followed by LOF~\cite{lof}, DeepSVDD~\cite{pmlr-v80-ruff18a}, ECOD~\cite{9737003}, Isolation Forest~\cite{iforest}, SO-GAAL~\cite{sogaal}, AE~\cite{aggarwal2017outlier}, VAE~\cite{kingma2013autoencoding}, and LUNAR~\cite{Goodge_Hooi_Ng_Ng_2022}. We evaluate our method on the same
datasets and report ROC-AUC and PR-AUC using the same evaluation convention.

\begin{table}[t]
\centering
\caption{Default hyperparameters for tabular baselines. The same settings are
used across datasets and seeds.}
\label{tab:baseline_hyperparams}
\begin{adjustbox}{max width=\textwidth}
\begin{tabular}{lll}
\toprule
Method & Input & Main settings \\
\midrule
COPOD & ordinal/binned
& PyOD default \\
DeepSVDD & one-hot
& 100 epochs, batch size 64, learning rate \(10^{-3}\), hidden dims \(100,50\) \\
DRL & one-hot
& 200 epochs, batch size 512, learning rate 0.05, LR decay 0.98 \\
DTE-Categorical / InvGamma / Gaussian & one-hot
& 100 epochs, batch size 128,
learning rate \(10^{-4}\), MLP hidden dims \(256,512,256\) \\
ECOD & ordinal/binned
& PyOD default \\
GOAD & one-hot
& 100 epochs, batch size 64, learning rate \(10^{-3}\), 256 transformations \\
Hamming-\(k\)NN & ordinal/binned
& \(k=25\), Hamming distance \\
ICL & one-hot
& 100 epochs, batch size 64, learning rate \(10^{-3}\), hidden dims \(100,50\) \\
IForest & one-hot
& 300 trees, \texttt{max\_samples=auto}, \texttt{contamination=auto} \\
MCM & one-hot
& 200 epochs, batch size 512, learning rate 0.05, LR decay 0.98 \\
\bottomrule
\end{tabular}
\end{adjustbox}
\end{table}

\subsection{Sensitivity Analysis}
\label{app:sensitivity_analysis}
\begin{figure}[t]
    \centering
    \begin{tabular}{cc}
    \includegraphics[width=0.47\linewidth]{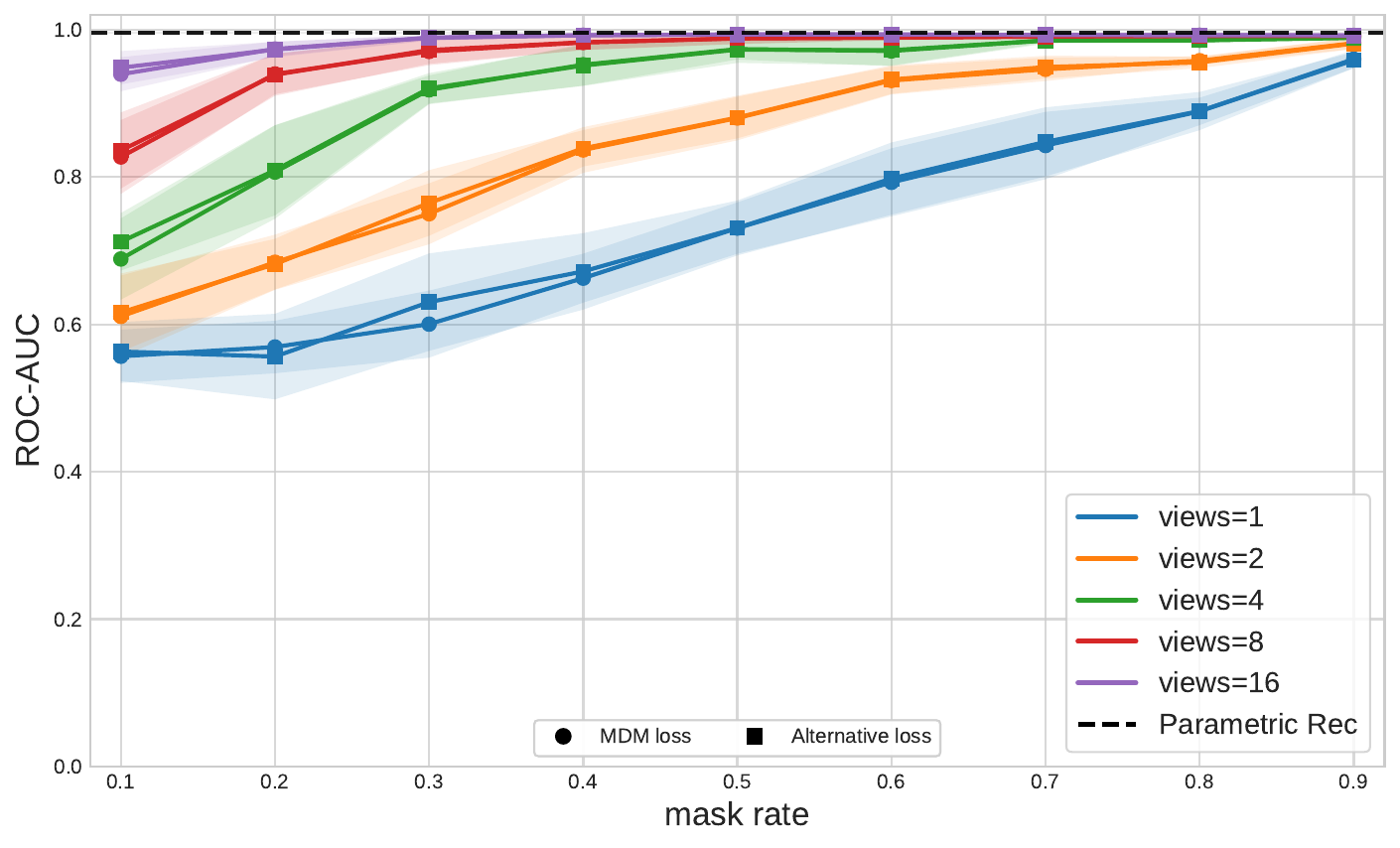} &
    \includegraphics[width=0.47\linewidth]{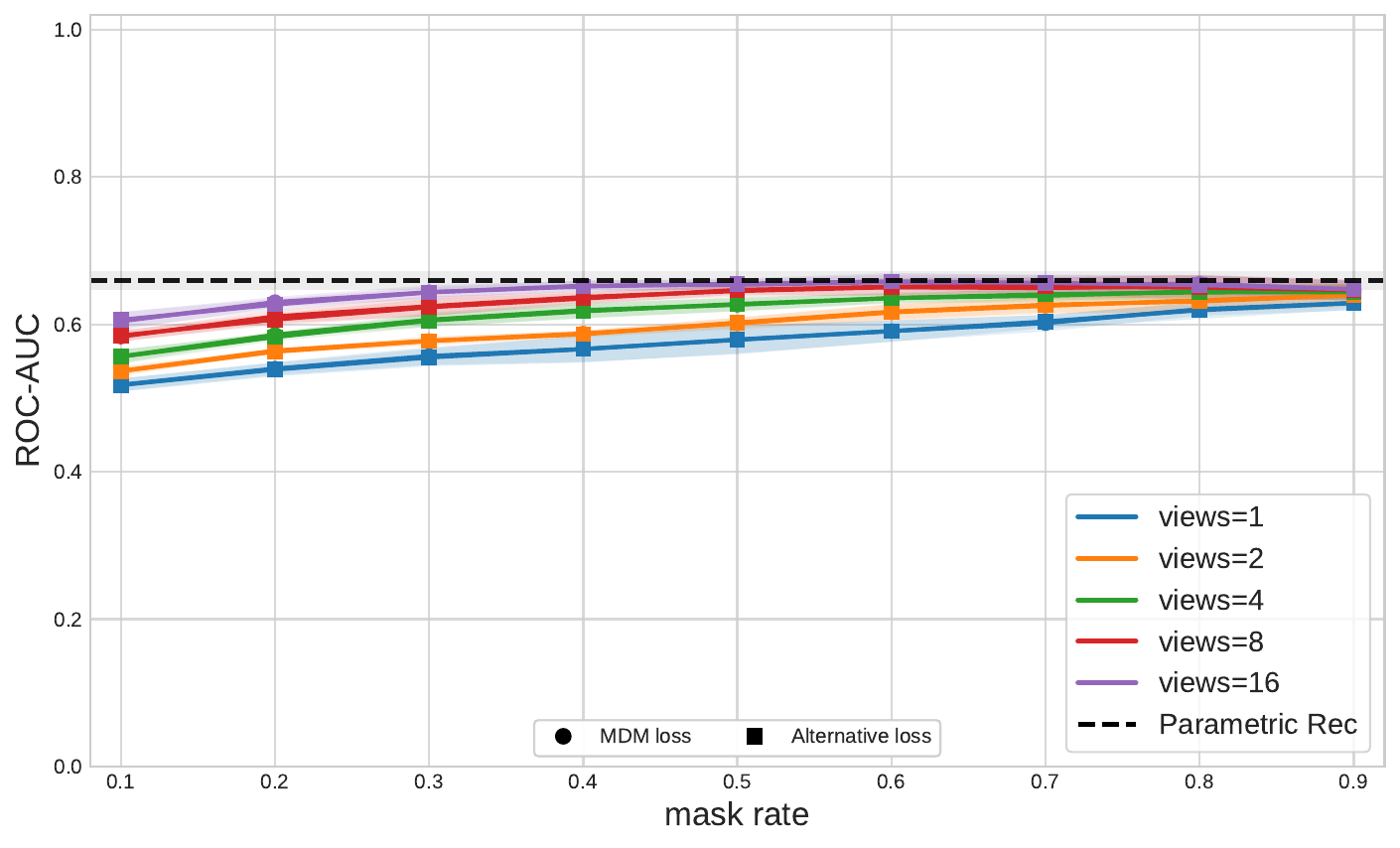}\\
    (a) U2R & (b) Bank
    \end{tabular}
    \caption{
    Additional sensitivity analysis of Parametric \algname{} to the probe mask rate $\tau$ on U2R and Bank. Together with the Vehicle Claims result in Figure~\ref{fig:probe_tau_sensitivity}, these results show that the optimal single probe level can vary across datasets, motivating the fixed multi-probe aggregation used in the main experiments.
    }
    \label{fig:probe_sensitivity_appendix}
\end{figure}

Figure~\ref{fig:probe_sensitivity_appendix} provides additional sensitivity results on U2R and Bank.
Compared with the Vehicle Claims result in Figure~\ref{fig:probe_tau_sensitivity}, these datasets exhibit different best-performing single probe levels.
This supports our choice of using a fixed multi-probe score that aggregates over a uniform probe grid, rather than tuning the probe level separately for each dataset.
In low-view regimes, the average masked reconstruction loss in
Eq.~\eqref{eq:para_rec_loss_view_level} is occasionally marginally better than the original masked diffusion loss on some datasets, possibly because it is more closely aligned with the normalized test-time reconstruction score.

\subsection{Runtime and Comparison}

We report runtime in Table~\ref{tab:runtime_vehicle_bank}. 
The comparison is conducted on two tabular datasets with different scales:
Bank contains 41,188 samples with 10 features, while Vehicle Claims contains
268,255 samples with 18 features. Runtime is decomposed into training or fitting
time, inference time, and total time.

The results show that Parametric \algname{} achieves a favorable balance between
training cost and inference efficiency. On the large Vehicle Claims dataset,
Parametric \algname{} takes 31.64 seconds for training and 16.28 seconds for inference,
leading to a total runtime of 47.92 seconds. This is substantially faster than
several deep baselines such as GOAD, ICL, and MCM, while also avoiding the high
test-time cost of Non-parametric \algname{}. On the Bank dataset, Parametric \algname{}
remains efficient, with a total runtime of 5.71 seconds.

The comparison also highlights the different computational profiles of the two
\algname\ variants. Non-parametric \algname{} does not require model training, but its
test-time cost can become large because each test sample is compared against the
normal training set through masked-view reconstruction. This effect is especially
clear on Vehicle Claims, where its inference time reaches 3005.69 seconds. In
contrast, Parametric \algname{} amortizes the reconstruction model during training and
then performs inference through a neural network, making it more
scalable on larger datasets. For smaller or medium-scale datasets such as Bank,
the non-parametric variant is still feasible, but the parametric version provides
a more consistent runtime profile across dataset sizes.

\begin{table}[t]
\centering
\caption{Runtime comparison on the Bank and Vehicle Claims datasets.}
\label{tab:runtime_vehicle_bank}
\begin{adjustbox}{max width=0.75\textwidth}
\begin{tabular}{lrrr rrr}
\toprule
& \multicolumn{3}{c}{Bank} 
& \multicolumn{3}{c}{Vehicle Claims} \\
\cmidrule(lr){2-4} \cmidrule(lr){5-7}
Method 
& Train / Fit & Inference & Total
& Train / Fit & Inference & Total \\
\midrule
Parametric \algname{} 
& 3.87 & 1.84 & 5.71
& 31.64 & 16.28 & 47.92 \\

Non-parametric \algname{} 
& 0.00 & 4.81 & 4.81
& 0.00 & 3005.69 & 3005.69 \\

COPOD 
& 0.01 & 0.02 & 0.03
& 0.13 & 0.22 & 0.35 \\

DeepSVDD 
& 18.73 & 0.03 & 18.75
& 130.88 & 0.20 & 131.09 \\

DRL 
& 20.24 & 0.02 & 20.25
& 132.68 & 0.12 & 132.80 \\

DTE-Categorical 
& 17.21 & 0.02 & 17.23
& 138.35 & 0.25 & 138.59 \\

DTE-Gaussian 
& 15.33 & 0.02 & 15.35
& 130.23 & 0.24 & 130.47 \\

DTE-InvGamma 
& 18.03 & 0.03 & 18.05
& 144.01 & 0.25 & 144.27 \\

ECOD 
& 0.12 & 0.02 & 0.14
& 0.24 & 0.25 & 0.49 \\

GOAD 
& 122.88 & 1.40 & 124.27
& 768.66 & 22.35 & 791.02 \\

Hamming-kNN 
& 0.00 & 3.17 & 3.17
& 0.00 & 65.66 & 65.66 \\

ICL 
& 61.86 & 0.11 & 61.97
& 817.37 & 2.02 & 819.39 \\

IForest 
& 0.25 & 0.09 & 0.35
& 0.35 & 1.35 & 1.70 \\

MCM 
& 49.69 & 0.04 & 49.72
& 678.01 & 0.63 & 678.64 \\
\bottomrule
\end{tabular}
\end{adjustbox}
\end{table}

\section{Proofs of Theoretical Guarantee}
\label{app:theory}


\begin{proof}[Proof of Theorem~\ref{thm:detection_performance}]
Recall the definitions for $S^\ast$ and $S_\rec^\ast$ in Eqs.~\eqref{def:S_ast} and \eqref{def:S_rec_ast}, respectively. With these, we also have
\begin{equation}
\label{eq:S_ast_diff}
\Delta_{\KL}(\bx) = S_\rec^\ast(\bx) - S^\ast(\bx) = \frac{1}{L}
    \sum_{\ell=1}^L
    \mathbb E_{\widetilde \bx \sim q_{\tau_\ell|0}(\cdot\mid \bx)}
    \bigg[
    \frac{
        \sum_{j\in M(\widetilde \bx)}
        \log \frac{q^j\bigl(x_j\mid \widetilde \bx,\tau_\ell\bigr)}{p_{\hat\theta}^j\bigl(x_j\mid \widetilde \bx,\tau_\ell\bigr)}
    }{
        |M(\widetilde \bx)|\vee 1
    }
    \bigg].
\end{equation}
When the log-likelihoods are bounded, we have that
\[ |S_\rec(\bx)| \le C,\quad |S_\rec^\ast(\bx)| \le C, \quad |S^\ast(\bx)| \le C,\quad |\Delta_{\KL}(\bx)| \le 2 C. \]

As follows, we provide the proofs for the Type-I error and the Type-II error, respectively. 

\paragraph{Type-I error} Note that 
\[S_{\mathrm{rec}}(\bx)= \underbrace{S_{\mathrm{rec}}(\bx) - S_\rec^\ast(\bx)}_{\text{stochastic test-time error from finite number of views}} + \underbrace{S_\rec^\ast(\bx) - S^\ast(\bx)}_{\text{training error from imperfect model}}  + S^\ast(\bx). \]
So, we have 
\begin{align*}
& \mathbb P_{\bx \sim q_0}(S_{\mathrm{rec}}(\bx) \ge \gamma)\\
&= \mathbb P_{\bx \sim q_0}( (S_{\mathrm{rec}}(\bx) - S_\rec^\ast(\bx)) + (S_\rec^\ast(\bx) - S^\ast(\bx) - \epsilon)  + (S^\ast(\bx) - \mu_0^\ast) \ge \gamma - \mu_0^\ast - \epsilon ) \\
& \stackrel{(i)}{\le}
\mathbb P_{\bx \sim q_0} \left( S_{\mathrm{rec}}(\bx) - S_\rec^\ast(\bx) \ge \frac{\gamma - \mu_0^\ast - \epsilon}{3} \right)+ 
\mathbb P_{\bx \sim q_0} \left( \Delta_{KL}(\bx) - \mathbb E_{\bx \sim q_0}[\Delta_{\KL}(\bx)] \ge \frac{\gamma - \mu_0^\ast - \epsilon}{3} \right) \\
&\qquad + \mathbb P_{\bx \sim q_0} \left( S^\ast(\bx) - \mu_0^\ast  \ge \frac{\gamma - \mu_0^\ast - \epsilon}{3} \right)\\
&\stackrel{(ii)}{\le} \exp\left(-\frac{2 K t^2}{C^2}\right) + \exp\left(-\frac{L t^2}{2 C^2}\right) + \exp\left(-\frac{2 Lt^2}{C^2}\right),
\end{align*}
where $t:=(\gamma - \mu_0^\ast - \epsilon)/3$, $(i)$ follows from the union bound that
\[
\mathbb{P}\left( \sum_{i=1}^3 U_i \ge u \right)
\le
\sum_{i=1}^3 \mathbb{P}(U_i \ge u/3),
\]
and $(ii)$ follows from Hoeffding's inequality.

\paragraph{Type-II error} The analysis is similar. When there is no estimation error, we have $S_\rec^\ast=S^\ast$, and thus 
\[ S_{\mathrm{rec}}(\bx)= S_{\mathrm{rec}}(\bx) - S_\rec^\ast(\bx) + S^\ast(\bx). \]
So, we have 
\begin{align*}
& \mathbb P_{q'}(S_{\mathrm{rec}}(\bx)< \gamma)\\
&= \mathbb P_{q'}( (S_{\mathrm{rec}}(\bx) - S_\rec^\ast(\bx)) + (S^\ast(\bx) - \mu_1^\ast) < \gamma - \mu_1^\ast ) \\
& \stackrel{(iii)}{\le}
\mathbb P_{q'}\left( S_{\mathrm{rec}}(\bx) - S_\rec^\ast(\bx)< \frac{\gamma - \mu_1^\ast }{2} \right)+ 
\mathbb P_{q'} \left( S^\ast(\bx) - \mu_1^\ast  < \frac{\gamma - \mu_1^\ast }{2}  \right)\\
&\stackrel{(iv)}{\leq} \exp\left(-\frac{K (\mu_1^\ast - \gamma)^2}{2 C^2}\right) + \exp\left(-\frac{L (\mu_1^\ast - \gamma)^2}{2 C^2}\right),
\end{align*}
where $(iii)$ holds again by union bound:
\[
\mathbb{P}\left( U_1 + U_2 \le -u \right)
\le
\sum_{i=1}^2 \mathbb{P}(U_i \le -u/2),
\]
and $(iv)$ holds again by Hoeffding's inequality. \qedhere

\end{proof}

\begin{table}[p]
\centering
\scriptsize
\begin{minipage}[c]{0.48\textwidth}
\centering
\rotatebox{90}{%
\begin{minipage}{0.9\textheight}
\centering
\captionof{table}{ROC-AUC on tabular datasets over five random seeds.}
\label{tab:rocauc_seed_avg}
\begin{adjustbox}{max width=\linewidth}
\begin{tabular}{lcccccccccccccc}
\toprule
Method & ad & aid362 & apas & bank & census & chess & cmc & covertype & probe & r10 & solar & u2r & vehicle\_claims & w7a \\
\midrule
Parametric \algname{} & 78.86 (2.34) & 67.93 (4.78) & 83.15 (3.36) & 65.90 (0.91) & 66.62 (0.52) & 73.55 (7.74) & 59.30 (8.97) & 91.92 (0.68) & 98.42 (0.09) & 99.14 (0.14) & 86.00 (4.61) & 99.55 (0.10) & \textbf{96.38 (0.09)} & 62.53 (1.93) \\
Non-parametric \algname{}  & \textbf{93.43 (1.72)} & 64.69 (4.70) & 80.01 (1.85) & 63.68 (0.89) & 67.01 (0.15) & 55.96 (11.17) & 56.85 (11.64) & 95.47 (0.21) & 98.38 (0.09) & 99.09 (0.12) & 85.85 (3.48) & 99.11 (0.12) & 83.14 (0.15) & 65.05 (1.44) \\
COPOD & 73.85 (2.08) & 65.12 (4.77) & 64.15 (3.96) & 59.31 (2.19) & 64.45 (1.25) & 70.76 (27.06) & 51.07 (10.94) & 97.72 (0.08) & 98.33 (0.08) & 98.83 (0.13) & 87.04 (5.04) & 97.74 (0.67) & 67.18 (0.27) & 58.68 (1.49) \\
DeepSVDD & 74.09 (12.18) & 59.93 (12.83) & 57.45 (17.30) & 52.05 (2.73) & 58.57 (7.34) & 45.68 (10.43) & 62.17 (5.69) & 47.84 (29.21) & 83.32 (4.83) & 96.88 (3.31) & 81.76 (3.32) & 76.77 (24.13) & 77.41 (1.46) & 81.23 (3.32) \\
DTE-Categorical & 85.18 (2.56) & 66.53 (3.26) & 86.37 (2.16) & \textbf{68.05 (1.06)} & 67.17 (0.75) & 51.97 (16.57) & \textbf{63.26 (6.77)} & 94.21 (1.06) & 97.88 (0.26) & 98.91 (0.16) & 78.06 (9.51) & 99.01 (0.15) & 65.82 (0.30) & 81.56 (1.10) \\
DTE-InvGamma & 79.30 (6.47) & 58.86 (11.71) & 75.50 (6.23) & 58.19 (5.85) & 61.98 (4.16) & 57.41 (7.37) & 52.98 (13.21) & 88.38 (8.25) & 91.65 (5.73) & 90.30 (4.41) & 72.58 (7.86) & 85.17 (29.79) & 87.70 (0.84) & 81.39 (1.54) \\
DTE-Gaussian & 91.27 (1.19) & 66.57 (5.09) & 72.40 (2.83) & 62.52 (0.90) & 61.34 (3.37) & 57.71 (12.17) & 56.48 (8.72) & 64.68 (15.46) & 98.45 (0.10) & \textbf{99.32 (0.11)} & 82.84 (4.87) & \textbf{99.61 (0.07)} & 84.65 (0.54) & \textbf{84.13 (2.55)} \\
DRL & 79.95 (1.49) & 65.59 (6.76) & 73.87 (6.00) & 58.72 (2.08) & 58.62 (8.07) & 45.64 (13.10) & 61.60 (8.43) & 44.53 (16.41) & \textbf{98.88 (0.24)} & 99.09 (0.48) & 82.13 (5.89) & 97.04 (3.18) & 68.31 (5.31) & 64.01 (12.23) \\
ECOD & 73.92 (2.08) & 65.59 (4.63) & 67.21 (3.51) & 60.90 (1.67) & 62.89 (2.28) & 62.21 (25.77) & 54.13 (9.53) & \textbf{97.82 (0.07)} & 98.35 (0.10) & 98.83 (0.13) & 85.78 (4.44) & 97.53 (1.38) & 66.86 (0.51) & 59.13 (1.48) \\
GOAD & 74.72 (2.42) & 65.00 (4.48) & 65.10 (4.91) & 57.06 (1.25) & 60.36 (1.24) & 62.28 (12.66) & 53.27 (9.48) & 95.43 (0.82) & 98.39 (0.10) & 98.72 (0.14) & \textbf{87.06 (3.97)} & 98.96 (0.12) & 58.04 (0.96) & 50.03 (1.84) \\
Hamming-kNN & 80.97 (2.02) & \textbf{71.69 (3.49)} & \textbf{86.89 (1.90)} & 64.50 (0.77) & 66.40 (0.27) & \textbf{93.22 (1.87)} & 60.59 (9.43) & 50.00 (0.00) & 89.17 (0.77) & 98.90 (0.12) & 83.67 (5.76) & 87.87 (1.53) & 72.98 (0.19) & 52.36 (1.35) \\
ICL & 79.69 (4.19) & 60.49 (3.52) & 44.57 (18.25) & 60.81 (2.53) & 65.53 (1.94) & 57.56 (7.41) & 57.17 (18.61) & 36.11 (21.32) & 23.85 (7.33) & 98.70 (0.27) & 82.33 (9.17) & 28.50 (31.30) & 81.43 (2.65) & 76.58 (2.93) \\
IForest & 42.17 (1.58) & 66.49 (4.75) & 52.75 (1.09) & 57.49 (0.94) & 59.32 (1.80) & 68.19 (14.57) & 56.49 (9.64) & 93.83 (2.45) & 96.67 (0.26) & 96.90 (0.51) & 83.16 (5.45) & 96.69 (0.85) & 46.15 (1.71) & 41.26 (1.69) \\
MCM & 75.47 (5.24) & 63.45 (5.47) & 67.51 (3.48) & 60.73 (1.94) & \textbf{68.18 (0.40)} & 56.73 (22.41) & 55.83 (12.21) & 91.43 (5.43) & 95.78 (5.30) & 98.61 (0.20) & 64.54 (19.06) & 95.65 (5.55) & 69.75 (2.51) & 54.05 (5.72) \\
\bottomrule
\end{tabular}
\end{adjustbox}
\end{minipage}%
}
\end{minipage}
\hfill
\begin{minipage}[c]{0.48\textwidth}
\centering
\rotatebox{90}{%
\begin{minipage}{0.9\textheight}
\centering
\captionof{table}{PR-AUC on tabular datasets over five random seeds.}
\label{tab:prauc_seed_avg}
\begin{adjustbox}{max width=\linewidth}
\begin{tabular}{lcccccccccccccc}
\toprule
Method & ad & aid362 & apas & bank & census & chess & cmc & covertype & probe & r10 & solar & u2r & vehicle\_claims & w7a \\
\midrule
Parametric \algname{}  & 64.40 (3.96) & 3.76 (0.94) & 4.91 (1.17) & 23.90 (1.19) & 9.06 (0.09) & 0.69 (0.48) & 5.89 (3.37) & 5.50 (0.83) & 91.59 (0.68) & \textbf{70.34 (3.75)} & 20.29 (6.55) & 46.37 (6.45) & \textbf{93.04 (0.06)} & 13.71 (2.47) \\
Non-parametric \algname{}  & \textbf{86.82 (3.30)} & 4.02 (0.88) & 5.37 (1.14) & 21.39 (0.83) & 9.05 (0.11) & 0.29 (0.30) & 5.45 (4.98) & 9.69 (0.40) & 89.88 (0.34) & 69.03 (3.78) & 17.48 (4.24) & 34.60 (4.06) & 74.39 (0.28) & 15.54 (2.56) \\
COPOD & 56.18 (4.18) & 2.83 (0.37) & 2.46 (0.44) & 17.42 (2.41) & 8.34 (0.32) & 0.68 (0.82) & 3.95 (2.19) & 12.84 (0.47) & 82.92 (0.67) & 60.20 (4.51) & 21.89 (6.23) & 35.63 (12.34) & 38.89 (0.42) & 4.66 (0.40) \\
DeepSVDD & 52.23 (17.63) & 2.61 (1.23) & 2.63 (1.67) & 13.84 (0.83) & 8.74 (2.10) & 0.10 (0.03) & 5.11 (1.89) & 1.57 (1.12) & 74.61 (4.29) & 68.71 (5.30) & 22.14 (5.65) & 33.54 (18.24) & 71.22 (1.15) & \textbf{55.84 (7.10)} \\
DTE-Categorical & 59.87 (7.51) & 3.24 (0.28) & 5.11 (1.09) & \textbf{27.71 (1.74)} & 9.12 (0.21) & 0.15 (0.10) & 3.83 (0.61) & 4.98 (0.25) & 73.54 (2.60) & 52.03 (4.20) & \textbf{22.90 (10.74)} & 22.83 (2.66) & 34.93 (0.27) & 40.88 (4.20) \\
DTE-InvGamma & 55.73 (15.85) & 4.50 (2.79) & 5.56 (2.33) & 18.83 (3.44) & 8.11 (0.74) & 0.14 (0.04) & 3.19 (0.67) & 6.04 (2.39) & 79.05 (6.78) & 30.92 (9.94) & 19.15 (8.40) & 41.56 (29.44) & 82.69 (1.46) & 52.83 (3.56) \\
DTE-Gaussian & 81.42 (2.97) & \textbf{5.07 (2.18)} & 4.38 (1.07) & 19.81 (1.39) & 8.30 (0.42) & 0.18 (0.07) & 3.68 (1.61) & 2.51 (0.90) & 91.86 (0.52) & 66.04 (5.54) & 21.82 (8.78) & \textbf{71.36 (3.45)} & 72.39 (0.93) & 45.98 (3.46) \\
DRL & 53.25 (5.60) & 3.58 (1.22) & 3.24 (1.23) & 16.11 (1.73) & 7.23 (1.41) & 0.10 (0.05) & 6.42 (3.07) & 1.36 (1.92) & \textbf{92.07 (0.57)} & 66.69 (12.28) & 21.93 (7.59) & 57.29 (15.54) & 53.97 (8.83) & 24.22 (14.89) \\
ECOD & 56.29 (4.17) & 2.85 (0.37) & 2.63 (0.42) & 19.16 (2.18) & 8.01 (0.54) & 0.28 (0.21) & 3.82 (1.89) & \textbf{13.00 (0.47)} & 83.03 (1.61) & 60.22 (4.51) & 21.85 (5.43) & 33.75 (15.29) & 38.53 (0.61) & 4.77 (0.41) \\
GOAD & 54.98 (4.92) & 2.72 (0.35) & 2.62 (0.68) & 15.40 (1.06) & 7.27 (0.23) & 0.30 (0.28) & \textbf{6.98 (8.48)} & 10.21 (2.13) & 90.09 (1.05) & 56.83 (3.89) & 21.23 (6.15) & 47.75 (12.90) & 26.84 (0.68) & 2.96 (0.17) \\
Hamming-kNN & 63.64 (3.75) & 3.48 (0.57) & \textbf{6.91 (1.09)} & 21.16 (0.88) & 8.76 (0.10) & \textbf{1.86 (1.34)} & 5.99 (5.52) & 0.47 (0.00) & 78.90 (1.40) & 59.61 (4.34) & 17.87 (5.56) & 60.29 (4.39) & 46.22 (0.32) & 3.07 (0.11) \\
ICL & 58.13 (6.48) & 3.32 (0.80) & 1.33 (0.50) & 20.19 (3.51) & 8.54 (0.60) & 0.47 (0.58) & 4.26 (3.15) & 1.05 (0.65) & 21.48 (7.54) & 61.78 (3.56) & 19.85 (8.81) & 22.91 (30.33) & 74.25 (2.89) & 33.82 (3.66) \\
IForest & 11.16 (0.27) & 3.02 (0.56) & 1.58 (0.12) & 15.23 (0.49) & 7.03 (0.33) & 0.50 (0.67) & 4.75 (3.37) & 5.93 (2.15) & 67.72 (2.57) & 39.91 (5.05) & 18.50 (5.90) & 19.14 (1.50) & 19.47 (0.75) & 2.32 (0.08) \\
MCM & 47.72 (9.38) & 2.63 (0.61) & 2.31 (0.53) & 19.20 (2.98) & \textbf{10.14 (0.24)} & 1.80 (3.27) & 3.85 (1.48) & 6.83 (2.34) & 85.97 (4.36) & 56.23 (7.23) & 14.25 (11.54) & 42.34 (22.82) & 41.25 (3.46) & 7.16 (2.24) \\
\bottomrule
\end{tabular}
\end{adjustbox}
\end{minipage}%
}
\end{minipage}

\end{table}

\end{document}